\documentclass[12pt]{article}

\usepackage{arxiv_package}
\usepackage{mathrsfs}
\usepackage{enumitem}
\usepackage{booktabs}
\usepackage{nicefrac}
\usepackage{bm}
\usepackage{amsthm}
\usepackage{amsmath}
\usepackage{amsfonts}
\usepackage{amssymb}
\usepackage{mathtools}
\usepackage{caption}
\usepackage{subcaption}
\usepackage{tabularx}
\usepackage{multirow}
\usepackage{makecell}
\usepackage{float}
\usepackage{appendix}
\usepackage{bbm}
\usepackage{dsfont}
\usepackage[normalem]{ulem}
\usepackage{cancel}
\usepackage{comment}
\usepackage{diagbox}

\usepackage{algorithm}
\usepackage{algpseudocode}
\graphicspath{{}}

\hypersetup{
    colorlinks=true,
    citecolor=blue,
    linkcolor=blue
}

\newcommand{\Pp}{\mathbb P}

\title{Conformalized Rate-Adaptive Sensing}

\author{
Jiawei Yang \quad Yao Zhang\\
\vspace{5pt}
Department of Statistics and Data Science\\
National University of Singapore
}

\begin{document}
\maketitle

\begin{abstract}
Many high-resolution imaging systems face the same fundamental question: when have enough measurements been collected to reconstruct an image accurately? We develop \emph{Conformalized Rate-Adaptive Sensing} (CoRAS), a method that adaptively chooses an acquisition or compression rate for each image while keeping the reconstruction error below a target level with high probability. As measurements are collected, an image reconstruction model gradually recovers the true image, producing a reconstruction path over acquisition rates.
CoRAS uses this path up to an early decision time to estimate the target stopping time, defined as the first time at which the reconstruction error falls below the target level. It then calibrates this estimate using images with similar early reconstruction behavior, producing an upper bound on the stopping time with marginal and approximate conditional coverage guarantees. Experiments on image datasets show that CoRAS attains the target stopping-time coverage, uses fewer measurements on average than fixed-rate stopping rules, and assigns more measurements to images that are harder to reconstruct.
\end{abstract}

\section{Introduction}
\label{sec:introduction}

Modern data systems often rely on compression and partial acquisition when raw high-dimensional data is too costly or too slow to acquire, transmit, or process in full. These constraints are common in sensing and imaging systems, where devices must operate under tight budgets of bits, energy, acquisition time, and latency. Yet in many applications, reconstruction quality cannot be sacrificed for efficiency: in medical imaging, for example, reducing the number of measurements can shorten acquisition time, but the reconstructed image must still allow radiologists to detect clinically relevant abnormalities \citep{liu2017current,flint2012optimal}. This trade-off motivates methods that improve efficiency while controlling reconstruction error.

Over the last decade, deep learning has become a central tool for improving image reconstruction quality in modern applications. Image reconstruction models \citep{ronneberger2015u,wang2020deep,reader2020deep} can recover high-quality images from compressed, corrupted, or partially observed inputs. However, the performance of these models can still vary substantially across images, especially when complex images are heavily compressed or undersampled. Recently, \citet{angelopoulos2022image} developed a conformal prediction method for constructing prediction intervals for reconstructed images, with distribution-free coverage guarantees for the true pixel values. These intervals can be viewed as post-hoc uncertainty estimates for reconstruction errors, and they raise a further question: how much information should be acquired to reduce this uncertainty enough for a good reconstruction?

More broadly, whenever collecting more information can improve quality, other applications face a similar \emph{stopping problem}: a system should acquire enough information to meet a target quality requirement, but no more than necessary, to limit cost. For example, a magnetic resonance imaging (MRI) scanner can collect more measurements to improve image quality \citep{pineda2020active}, a time-series classifier can wait for more observations before making a decision \citep{achenchabe2021early}, and a reasoning language model can spend more computation before returning an answer \citep{xie2026statistical}. In each case, waiting longer may improve quality, but past some point the extra measurements or computation only add cost without improving the final output. Where that point lies differs from one case to the next.

\subsection{Overview}

This paper develops \emph{Conformalized Rate-Adaptive Sensing} (CoRAS), a method for solving this stopping problem in image reconstruction. Here, the target stopping time is the first point along the sampling path at which the reconstruction error falls below a target level. This quantity is image-specific: an image with simple content may be reconstructed accurately from a heavily compressed observation, while a complex image with many details may require a higher sampling rate. It is also unobserved at test time, since the true image is not available when the acquisition decision is made.

In CoRAS, we track the sequence of reconstructed images produced as the sampling rate increases. Although the error itself is not observed, the way the reconstruction changes along this path tells us how much more sampling is needed. We use this reconstruction path to predict the image-specific stopping time in two steps. A \emph{horizontal} step extrapolates the early history of reconstruction residuals to estimate when the loss will drop below the target; the extrapolation is guided  by a theoretical model of the image generating process. A \emph{vertical} step then corrects the horizontal prediction using calibration images with similar states, where the state combines the horizontal prediction with the entropy of the current reconstruction as a measure of image complexity. Applying full conformal prediction to the corrected estimate yields a stopping time with finite-sample marginal validity for error control. We further develop a model-based theory showing when the horizontal and vertical steps together achieve approximate conditional validity across images of different complexity. Experiments on Fashion-MNIST and brain MRI data show that CoRAS attains the target stopping-time coverage while using fewer measurements on average than fixed-rate conformal baselines, and that it assigns more measurements to
harder images.

The rest of the paper is organized as follows. \Cref{sec:related} reviews related work. \Cref{sec:problem} sets up the image reconstruction problem and the fixed-rate stopping rules. \Cref{sec:method} introduces the proposed CoRAS framework. \Cref{sec:experiments} reports experiments on Fashion-MNIST and brain MRI data, and \Cref{sec:discussion} concludes with a discussion.

\section{Related work}
\label{sec:related}
Conformal prediction is a popular framework for turning the predictions of
complex machine learning models into prediction sets with finite-sample
marginal coverage under exchangeability
\citep{vovk2005algorithmic,Angelopoulos2024TheoreticalFO}. Recent methods seek stronger local or
approximate conditional guarantees by matching a test point to calibration
points with similar covariates \citep{guan2023localized,hore2025conformal},
learned representations \citep{kiyani2024conformal}, or learned cluster
memberships \citep{zhang2024posterior}. CoRAS differs from these methods in both
the information available at test time and the object being predicted. In our
setting, no baseline covariates are available, and the true image has not yet
been fully observed. CoRAS instead uses the early part of the reconstruction
path to predict the target stopping time and to construct a valid upper bound
for it. Both the path-based prediction and its coverage theory are new to the
conformal prediction literature.

A related line of work calibrates a predictive system to control a
user-specified loss. Risk-controlling prediction sets \citep{bates2021distribution}
and the Learn-then-Test framework \citep{angelopoulos2021learn} give
finite-sample control of a population risk, while conformal risk control
\citep{angelopoulos2024conformal} extends conformal prediction to any monotone
loss. These methods select a global tuning parameter from a
nested family of prediction sets, and the resulting guarantee is about a
population-level risk. CoRAS instead targets the stopping time along a
reconstruction path, and its output is an image-specific stopping time, or
equivalently an acquisition rate, rather than a prediction set. The guarantees
are also built differently: risk-controlling prediction sets apply a
concentration bound to an empirical risk estimate, and Learn-then-Test casts
risk control as a testing problem. CoRAS is closest to conformal risk control,
since both use exchangeability for finite-sample validity, but it calibrates an
image-specific stopping rule rather than a single shared parameter.

CoRAS is also related to synthetic control methods from causal inference.
Classical synthetic control estimates a treated unit's counterfactual as a
weighted average of control units matched on pre-treatment outcomes
\citep{abadie2003economic,abadie2010synthetic,abadie2021using}, while augmented
variants combine this matching with an outcome regression
\citep{ben2021augmented,arkhangelsky2021synthetic}. In this analogy, the
horizontal prediction in CoRAS plays the role of the outcome regression and the
vertical correction plays the role of the matching step. Two features set CoRAS
apart. Its horizontal model is tailored to image acquisition and is motivated by
a theoretical model of the image-generating process, and its horizontal and
vertical steps are embedded in a conformal framework, yielding finite-sample
marginal validity together with a new model-based theory for approximate
conditional coverage.

Adaptive sensing and early-decision problems arise in several application areas.
In accelerated MRI, active-acquisition methods learn image-dependent policies
for choosing which \(k\)-space measurements to collect, typically with
reinforcement learning or policy-gradient methods
\citep{pineda2020active,bakker2020experimental}. Early classification of time
series similarly trades off accuracy against the cost of waiting for more
observations \citep{achenchabe2021early}, and test-time scaling in large
language models spends more computation on harder prompts
\citep{snell2025scaling}. These methods mostly optimize an empirical trade-off
between accuracy and resource use. In comparison, CoRAS estimates an image-specific
stopping time and calibrates it into a valid stopping rule with a finite-sample
error guarantee.

\section{Calibration of stopping times}
\label{sec:problem}

\subsection{Image reconstruction}\label{sect:setup}

In many imaging systems, we want to represent an image using less information while still being able to reconstruct it accurately. This reduction can happen after the image is observed, as in standard compression, or during acquisition, where the system collects only part of the measurements needed to form the image. Our focus is the acquisition case, where the rate must be chosen before the full image is available.

Let \(Y\in\mathbb R^K\) denote the true image, and for each sampling rate \(\theta\in(0,1]\), let \(X(\theta)\) denote the observation available at rate \(\theta\). Throughout, \(X\) is what the system observes and \(Y\) is what it must recover. Smaller \(\theta\) retains less information, while \(\theta=1\) corresponds to the full observation. The reconstruction task is to recover \(Y\) from its low-rate observation \(X(\theta)\): lower rates save resources but make reconstruction harder.

A concrete example is MRI reconstruction from incomplete measurements collected in Fourier space, often called \emph{k}-space. To reduce acquisition time, one may collect only a subset of Fourier coefficients. Let \(\mathcal F\) denote the discrete Fourier transform, and let \(M_\theta\in\{0,1\}^K\) be a binary mask satisfying
\(
K^{-1}\sum_{j=1}^K (M_\theta)_j=\theta.
\)
A simple low-rate observation can then be formed by zero-filling the unmeasured Fourier coefficients:
\[
X(\theta)
=
\mathcal F^{-1}\!\left(M_\theta\odot \mathcal F(Y)\right).
\]
This observation is usually degraded by artifacts caused by the missing frequency information. The goal is to reconstruct \(Y\) by removing these artifacts.

\subsection{Raw stopping time}

We now ask how far along the sampling path a system must go before the error is small enough. We start with the simplest case, where no reconstruction model is used and the rate is calibrated directly from past images.

Let \(\{Y_i\}_{i=1}^n\) be a calibration dataset used to choose the sampling rate, and let
\[
0<\theta_1<\theta_2<\cdots<\theta_{t_{\max}}\leq 1
\]
be a fixed grid of sampling rates. Throughout this article, we assume that \(X_i(\theta_{t_{\max}})\) is close enough to \(Y_i\) that stopping at \(t_{\max}\) always controls the loss at the target level. 

At every time \(t\), we observe \(X_i(\theta_t)\) and the raw reconstruction loss
\begin{equation}\label{equ:lraw}
L_i^{\text{raw}}(t)
=
\frac1K\sum_{k=1}^K
\bigl[Y_{i,k}-X_{i,k}(\theta_t)\bigr]^2,
\end{equation}
which compares the observation with the true image, without using any reconstruction model. Given a target error level \(c>0\), we define the target stopping time as
\begin{equation}
\label{eq:target_time}
T_i^{\text{raw}}
:=
\min\{t\in[t_{\max}]:L_i^{\text{raw}}(t)\le c\}.
\end{equation}
This is the first time point at which the reconstruction loss drops below the target level. We observe \(T_i^{\text{raw}}\) for every calibration image \(Y_i\), \(i\in[n]\), and want to use these observed stopping times to construct an upper bound, that is, a stopping rule, for the unobserved stopping time of a test image \(Y_{n+1}\).

Given a miscoverage level \(\alpha\in(0,1)\), the raw stopping rule is defined as
\begin{equation}\label{equ:vraw}
V_{\alpha}^{\text{raw}} = Q_{1-\alpha}\!\left\{
\frac{1}{n+1}\sum_{i=1}^n\delta_{T_i^{\text{raw}}}
+\frac{1}{n+1}\delta_{t_{\max}}
\right\},
\end{equation}
where \(Q_{1-\alpha}\) denotes the \((1-\alpha)\)-quantile of the displayed discrete distribution, that is, the smallest time \(t\) at which the distribution accumulates at least mass \(1-\alpha\).

\begin{theorem}[Marginal validity]
\label{thm:rawmarginal}
Assume that the images \(Y_1,\ldots,Y_{n+1}\) are exchangeable, and that the loss in \eqref{equ:lraw} is nonincreasing in \(t\) with probability one. Then the stopping rule \(V_{\alpha}^{\text{raw}}\) defined in \eqref{equ:vraw} satisfies
\[
\Pp\{L_{n+1}^{\text{raw}}(V_{\alpha}^{\text{raw}})\le c\}\ge 1-\alpha.
\]
\end{theorem}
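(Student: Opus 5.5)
The plan is to reduce the loss event to an event about stopping times, and then apply the standard split-conformal quantile argument.

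\emph{Step 1 (reduction).} Because \(L_{n+1}^{\text{raw}}(t)\) is nonincreasing in \(t\) almost surely, \(L_{n+1}^{\text{raw}}(t)\le c\) holds exactly when \(t\ge T_{n+1}^{\text{raw}}\). The set in \eqref{eq:target_time} is nonempty by the standing assumption that stopping at \(t_{\max}\) controls the loss, so \(T_{n+1}^{\text{raw}}\) is well defined and at most \(t_{\max}\). It therefore suffices to show
\[
\Pp\{T_{n+1}^{\text{raw}}\le V_{\alpha}^{\text{raw}}\}\ge 1-\alpha .
\]

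\emph{Step 2 (replace the point mass at \(t_{\max}\)).} Let
\[
\tilde V = Q_{1-\alpha}\Bigl\{\tfrac{1}{n+1}\textstyle\sum_{i=1}^{n+1}\delta_{T_i^{\text{raw}}}\Bigr\}
\]
be the quantile of the full empirical distribution that includes the test image. Since \(T_{n+1}^{\text{raw}}\le t_{\max}\), replacing \(\delta_{T_{n+1}^{\text{raw}}}\) with \(\delta_{t_{\max}}\) moves mass only upward. Hence the distribution in \eqref{equ:vraw} is stochastically larger than the full one, and \(\tilde V\le V_{\alpha}^{\text{raw}}\). Next we use the usual quantile identity: \(T_{n+1}^{\text{raw}}\le \tilde V\) if and only if \(T_{n+1}^{\text{raw}}\le V_{\alpha}^{\text{raw}}\). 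In one direction this follows from the monotonicity just shown. In the other, if \(T_{n+1}^{\text{raw}}\le V_{\alpha}^{\text{raw}}\), then the two distributions agree on \((-\infty,t]\) for every \(t\ge T_{n+1}^{\text{raw}}\). In fact it is enough to use \(\{T_{n+1}^{\text{raw}}\le\tilde V\}\subseteq\{T_{n+1}^{\text{raw}}\le V_{\alpha}^{\text{raw}}\}\), so this equivalence need not be proved in full.

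\emph{Step 3 (exchangeability).} Each \(T_i^{\text{raw}}\) is the same deterministic function of \(Y_i\); the mask grid is fixed and \(X_i\) is determined by \(Y_i\). So \(T_1^{\text{raw}},\dots,T_{n+1}^{\text{raw}}\) are exchangeable, and \(\tilde V\) is a symmetric function of them. By the definition of the quantile, at least \(\lceil(1-\alpha)(n+1)\rceil\) of the \(T_i^{\text{raw}}\) satisfy \(T_i^{\text{raw}}\le\tilde V\). Therefore
\[
\frac{1}{n+1}\sum_{i=1}^{n+1}\mathbf 1\{T_i^{\text{raw}}\le\tilde V\}\ge 1-\alpha
\]
deterministically. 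Taking expectations and using exchangeability, each term has the same probability, which gives \(\Pp\{T_{n+1}^{\text{raw}}\le\tilde V\}\ge1-\alpha\). Ties among the discrete stopping times cause no difficulty here, because the argument counts only the event \(\le\).

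The main subtlety is Step 1. It needs monotonicity of the loss so that stopping at any time after \(T_{n+1}^{\text{raw}}\) still controls the loss, and it needs the assumption at \(t_{\max}\) so that \(T_{n+1}^{\text{raw}}\) exists and Step 2's domination holds. The rest is the routine conformal argument.
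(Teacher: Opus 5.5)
Your proof is correct and follows the paper's route. The paper cites the standard conformal quantile argument to get \(\Pp\{T_{n+1}^{\text{raw}}\le V_{\alpha}^{\text{raw}}\}\ge 1-\alpha\), then uses monotonicity exactly as in your Step 1; your Steps 2--3 (the \(\delta_{t_{\max}}\) domination and the exchangeable counting bound) simply write out that standard argument. One small slip sits in the aside you flag as unnecessary: the two empirical CDFs agree for \(t<T_{n+1}^{\text{raw}}\), not for \(t\ge T_{n+1}^{\text{raw}}\), and that lower-range agreement is what would give the reverse inclusion.
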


By the standard marginal validity proof of conformal prediction \citep{vovk2005algorithmic,Angelopoulos2024TheoreticalFO}, the exchangeability assumption implies that \(T_{n+1}^{\text{raw}}\le V_{\alpha}^{\text{raw}}\) with probability at least \(1-\alpha\). The monotonicity assumption ensures that
\[
T_{n+1}^{\text{raw}}\le t \quad \Longrightarrow\quad L_{n+1}^{\text{raw}}(t)\le L_{n+1}^{\text{raw}}(T_{n+1}^{\text{raw}})\le c,
\]
so \(V_{\alpha}^{\text{raw}}\) controls the loss \(L_{n+1}^{\text{raw}}(V_{\alpha}^{\text{raw}})\). Monotonicity is natural here because \(X_i(\theta_t)\), based on more measurements,
approximates \(Y_i\) more closely as \(t\) increases.

We note that the stopping rule \(V_{\alpha}^{\text{raw}}\) is the same for every test image, regardless of its complexity. Unlike the usual conformal prediction setting, here we have no baseline covariates to adapt the rule, so a single fixed rate must serve all test images. Although \Cref{thm:rawmarginal} guarantees marginal validity, this single rate tends to sample more than necessary on simple images, while leaving the error above the target on complex ones.

\subsection{Model-based stopping time}

The raw loss compares the observation with the true image directly.
Given a low-rate observation \(X(\theta)\), an image reconstruction model $\hat\mu$ can produce an estimate of the true image \(Y\). Denote the reconstruction by
\[
\hat Y(\theta)=\hat\mu(X(\theta)),
\]
and write \(\hat Y_k(\theta)\) for its \(k\)-th pixel. The model \(\hat\mu\) can be fitted with a squared reconstruction loss on separate training data \(\{(X_j(\theta),Y_j)\}_{j=1}^{n_{\text{train}}}\), where \(\theta\) varies across training images, so that \(\hat\mu\) can reconstruct images from observations at any rate.

 Throughout, \(\hat Y_i(t):=\hat Y_i(\theta_t)\) denotes the reconstruction from the first \(t\)
acquisition steps.
For every calibration image \(i\), we observe a sequence of reconstructed images
\[
\hat Y_i(1),\hat Y_i(2),\ldots,\hat Y_i(t_{\max}).
\]
As with the raw loss \(L_i^{\text{raw}}(t)\) above, we define the reconstruction loss of \(\hat Y_i(t)\) by
\begin{equation}\label{equ:lit}
L_i(t)
=
\frac1K\sum_{k=1}^K
\bigl[Y_{i,k}-\hat Y_{i,k}(t)\bigr]^2.
\end{equation}
We assume that the target level is attainable on the grid, so that \(L_i(t)\le c\) for some \(t\le t_{\max}\) for every image. Denote the target stopping time of image \(i\) by
\begin{equation}\label{equ:t_target}
T_{i}
:=
\min\{t\in[t_{\max}]:L_{i}(t)\le c\}.
\end{equation}
The fixed-rate stopping rule is the \((1-\alpha)\)-quantile of the calibration stopping times:
\begin{equation}\label{equ:fixed_rate}
\hat V_{\alpha}
:=
Q_{1-\alpha}\!\left\{
\frac{1}{n+1}\sum_{i=1}^n\delta_{T_i}
+\frac{1}{n+1}\delta_{t_{\max}}
\right\}.
\end{equation}
As in \Cref{thm:rawmarginal}, the exchangeability and loss monotonicity assumptions give
\[
\Pp\{L_{n+1}(\hat V_{\alpha})\le c\}\ge 1-\alpha.
\]
The rule \(\hat V_{\alpha}\) is again the same for every test image. Although it has marginal validity, it may still cover the stopping times of complex images with probability below \(1-\alpha\). The goal of the adaptive method below is to retain the same error control while allowing the selected rate to vary across test images.

\section{CoRAS: Conformalized Rate-Adaptive Sensing}
\label{sec:method}

Standard conformal prediction methods produce prediction sets for responses  based on 
the covariates of the test point.  For the target stopping time $T_{n+1}$, we instead observe the
reconstruction path, and it lets us stop at the point where the predicted loss first falls
below $c$. CoRAS begins by acquiring every image up to an early decision time
\(t_0\), fixed in advance and the same for all images. In our experiments \(t_0\) is chosen so
that most images still have reconstruction error above the target level $c$, so
little acquisition is wasted. We then read off the path up to \(t_0\) and decide whether to
keep going and for how long. The rest of this section develops this idea.

\subsection{Horizontal prediction of stopping times}\label{subsec:horizontal}

\subsubsection{Next-band residual history}
We first develop the horizontal prediction model for acquisition schemes in
which each step directly reveals a new collection of coefficients of an
orthonormal transform of the target image.
Let \(\mathcal T\) be an orthonormal transform in which acquisition is ordered: a
normalized Fourier transform in a Fourier acquisition model, or an orthonormal wavelet
transform in a multiresolution one. For the horizontal model, we additionally assume
that the final step acquires the full observation, \(\theta_{t_{\max}}=1\), so that the
acquisition exhausts the transform coordinates. Denote the nested orthogonal
projections in the transform domain by
\[
P_{\le 1}\preceq P_{\le 2}\preceq\cdots\preceq P_{\le t_{\max}}=\mathrm{Id},
\]
where \(P_{\le t}\) keeps the coordinates available after \(t\) acquisition steps, and define
\[
\Delta P_1:=P_{\le 1},
\qquad
\Delta P_t:=P_{\le t}-P_{\le t-1},
\qquad t=2,\ldots,t_{\max},
\]
so that \(\Delta P_t\) isolates the information added between \(t-1\) and \(t\). 
For nested orthogonal projections, the increments are mutually orthogonal and sum to the identity,
\[
\mathrm{Id}=\sum_{t=1}^{t_{\max}}\Delta P_t,
\qquad
\Delta P_t\Delta P_s=0
\quad\text{for }t\neq s .
\]
This band structure lets us write the squared reconstruction loss \(L_i(t)\) in \eqref{equ:lit} as a sum of
bandwise residual energies. Two facts do the work. Since \(\mathcal T\) is orthonormal, it
leaves squared lengths unchanged, \(\|\mathcal T v\|_2^2=\|v\|_2^2\); and once a vector is
split into mutually orthogonal pieces, its squared length is the sum of the squared lengths of
those pieces.
Writing \(v_i(t):=Y_i-\hat Y_i(t)\) for the reconstruction residual, with
\(\hat Y_i(t)\) shorthand for \(\hat Y_i(\theta_t)\), and applying the two facts in turn,
\begin{equation}
\label{eq:l2_band_decomposition_residual_state}
L_i(t)
=
K^{-1}\|v_i(t)\|_2^2
=
K^{-1}\|\mathcal T v_i(t)\|_2^2
=
K^{-1}\sum_{s=1}^{t_{\max}}\|\Delta P_s\mathcal T v_i(t)\|_2^2
=
\sum_{s=1}^{t_{\max}}
R_{i,s}(t),
\end{equation}
where
\(
R_{i,s}(t):=K^{-1}\|\Delta P_s\mathcal T v_i(t)\|_2^2
\)
is the contribution of band \(s\) to the reconstruction error at time \(t\), and the third equality is Parseval's identity for the bands.

For \(t\ge2\), define the one-step-ahead residual
\begin{equation}
\label{eq:next_band_prediction_residual}
I_{i,t}
:=
R_{i,t}(t-1)
=
K^{-1}
\left\|
\Delta P_t\mathcal T Y_i
-
\Delta P_t\mathcal T\hat Y_i(t-1)
\right\|_2^2 .
\end{equation}
At time \(t-1\), band \(t\) has not yet been acquired, so \(I_{i,t}\) measures how well
the reconstruction at time \(t-1\) predicted the new information revealed at step
\(t\). Once band \(t\) is acquired, the coefficients \(\Delta P_t\mathcal T Y_i\) are
measured, so \(I_{i,t}\) can be computed without knowing the full image. A large
\(I_{i,t}\) means the new band carries residual structure the previous reconstruction
missed; a small one means the band was well predicted.

This observability holds when the measurements are coefficients of
\(\mathcal T\): in single-channel Fourier acquisition, the newly acquired \(k\)-space
columns are exactly \(\Delta P_t\mathcal F(Y_i)\), so \(I_{i,t}\) compares measured
columns with the same columns of the reconstruction's transform. In multi-coil MRI,
these coefficients are not directly measured because the coil combination is
nonlinear; our M4Raw experiment therefore computes the residual retrospectively from the
fully sampled reference; details are given in Appendix~\ref{sect:M4Raw}.

At the decision time \(t_0\), we use the residual history \(I_{i,2},\ldots,I_{i,t_0}\) to
extrapolate the future one-step residuals \(I_{i,t_0+1},\ldots,I_{i,t_{\max}}\). Since these
residuals are expected to decay as more of the image is revealed, we model them by a smooth
decaying curve. The next subsection formalizes this through an ordered-tail model.

\subsubsection{Continuous ordered-tail model}
Let \(1=\omega_1<\omega_2<\cdots<\omega_{t_{\max}}\) be ordered frequency cutoffs, so that the
\(t\)-th newly acquired band is \((\omega_{t-1},\omega_t]\). We take the cutoffs to be
logarithmically spaced,
\begin{equation}
\label{equ:omegat}
\omega_t=\exp\{\Delta(t-1)\},
\qquad \Delta>0 .
\end{equation}
This logarithmically spaced grid is a version of the coarse-to-fine grids used in multiresolution image representations. It allocates acquisition steps evenly across relative frequency scales rather than absolute frequency increments.

Let \(E_i(\tau):=Y_i-\hat Y_i(\tau)\) be the reconstruction residual at acquisition time
\(\tau\). The method uses the discrete residual \(I_{i,t}\) of
\eqref{eq:next_band_prediction_residual}. The model below describes its continuous
idealization: the projection \(\Delta P_t\) restricts transform coefficients to the band
\((\omega_{t-1},\omega_t]\), and the continuous analogue of the next-band residual is
\[
I^\star_{i,t}
=
K^{-1}
\int_{\omega_{t-1}}^{\omega_t}
\bigl| [\mathcal T E_i(t-1)](x) \bigr|^2\,dx .
\]

\begin{assumption}
\label{ass:ordered_tail_preprojection_residual}
Let \(x\ge1\) index an idealized continuous frequency or scale coordinate associated
with \(\mathcal T\). Suppose the transform coefficient function of
\(E_i(\tau)\) satisfies
\begin{equation}
\label{equ:E_i}
[\mathcal T E_i(\tau)](x)
=
A_i x^{-p_i(\tau)},
\qquad A_i>0,\quad p_i(\tau)>1/2 .
\end{equation}
\end{assumption}

In \Cref{ass:ordered_tail_preprojection_residual}, the coefficient \(A_i\) captures the overall
residual scale for image \(i\), while \(p_i(\tau)\) captures the shape of the residual tail
across the ordered transform coordinate. Larger \(p_i(\tau)\) means a steeper tail and less
residual energy at finer scales. The dependence of \(p_i(\tau)\) on \(\tau\) allows the residual
spectrum to change as more information is acquired. In practice,  local texture, reconstruction artifacts, and other deviations in coefficient
magnitude are not represented exactly by this idealized model.

\begin{proposition}
\label{prop:next_band_residual_power_law_decay}
Suppose \Cref{ass:ordered_tail_preprojection_residual} holds and the cutoffs are logarithmically
spaced as in \eqref{equ:omegat}. If \(p_i(\tau)\) is constant, say \(p_i(\tau)\equiv p_i\), then,
for \(t=2,\ldots,t_{\max}\),
\[
\log I^\star_{i,t}
=
\log\left[
K^{-1}A_i^2
\frac{1-\exp\{-(2p_i-1)\Delta\}}{2p_i-1}
\right]
-
(2p_i-1)\Delta(t-2).
\]
\end{proposition}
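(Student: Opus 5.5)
The argument is a direct computation: substitute the power law of \Cref{ass:ordered_tail_preprojection_residual} into the continuous next-band residual $I^\star_{i,t}$ and evaluate the integral over the logarithmically spaced band.

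\textbf{Step 1 (substitution).} With $p_i(\tau)\equiv p_i$, the transform of the residual at time $t-1$ is $[\mathcal T E_i(t-1)](x)=A_i x^{-p_i}$ for every $x\ge1$. Since $A_i>0$ and $x\ge1$, this function is positive, so its squared modulus is $A_i^2x^{-2p_i}$. Hence
\[
I^\star_{i,t}=K^{-1}A_i^2\int_{\omega_{t-1}}^{\omega_t}x^{-2p_i}\,dx .
\]

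\textbf{Step 2 (the integral).} Put $q:=2p_i-1$, which is positive because $p_i>1/2$. Then $2p_i\neq1$, so the logarithmic case of the antiderivative does not arise, and
\[
\int_{\omega_{t-1}}^{\omega_t}x^{-2p_i}\,dx=\frac{\omega_{t-1}^{-q}-\omega_t^{-q}}{q}.
\]
Substituting $\omega_t=\exp\{\Delta(t-1)\}$ from \eqref{equ:omegat} and factoring out $\omega_{t-1}^{-q}=\exp\{-q\Delta(t-2)\}$ gives
\[
I^\star_{i,t}=K^{-1}A_i^2\,\frac{1-e^{-q\Delta}}{q}\,e^{-q\Delta(t-2)}.
\]

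\textbf{Step 3 (logarithm).} Every factor in this expression is strictly positive: $A_i>0$, $q>0$ and $\Delta>0$ together give $1-e^{-q\Delta}>0$. So the logarithm is well defined, and taking it yields the stated identity
\[
\log I^\star_{i,t}=\log\left[K^{-1}A_i^2\,\frac{1-e^{-q\Delta}}{q}\right]-q\Delta(t-2).
\]
This holds for every $t\ge2$, because then $\omega_{t-1}\ge\omega_1=1$ lies inside the domain of the assumption.

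There is no genuine obstacle here. The only points that need care are the convention that the band is evaluated at the reconstruction from time $t-1$, which is where the constancy of $p_i$ is used, and the bookkeeping that turns $\omega_{t-1}$ into the offset $t-2$.
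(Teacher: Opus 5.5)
Your proposal is correct and follows the paper's own proof: substitute the constant-exponent power law, integrate $x^{-2p_i}$ over $(\omega_{t-1},\omega_t]$, factor out $\exp\{-(2p_i-1)\Delta(t-2)\}$ using the logarithmic grid, and take logarithms. Your positivity and domain checks, which make the logarithm well defined and place $\omega_{t-1}\ge 1$ inside the assumption's range, are minor additions that the paper leaves implicit.
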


\Cref{prop:next_band_residual_power_law_decay} is the stationary case. When \(p_i(\tau)\) is
constant, the log next-band residual is exactly linear in the acquisition step \(t\), with slope
\(-(2p_i-1)\Delta\). In this idealized stationary model, the log residuals lie exactly on a line, so a
degree-one fit recovers their trend without approximation error.

In Appendix~\ref{app:horizontal_extra},
\Cref{prop:preprojection_residual_polynomial_trend} generalizes
\Cref{prop:next_band_residual_power_law_decay} to nonstationary residual
spectra: if \(p_i(\tau)\) is smooth, \(\log I^\star_{i,t}\) admits a local
polynomial approximation of order \(q\) near the decision time, with remainder
controlled by the smoothness of \(p_i\); constant \(p_i\) makes the order-one
approximation exact and recovers the proposition above. We also show that if
the residual tail becomes no flatter from one step to the next, then
\(I^\star_{i,t}\) is decreasing. The monotone constraint in our fit encodes
this, while an unconstrained polynomial fit on a short history can turn upward past the
last observation, which is implausible when each step reveals more information of the
image. Finally, logarithmic spacing only motivates the horizontal extrapolation model: on
a general grid the exact linear relationship may not hold, but the low-order
monotone polynomial remains a flexible local approximation to the residual
trend.

\subsubsection{Image-wise monotone polynomial fit}
In the actual method, we observe the finite-dimensional residuals
\(I_{i,2},\ldots,I_{i,t_0}\), not the continuous idealization \(I^\star_{i,t}\). We
therefore work with
\begin{equation}\label{equ:def_H}
H_{i,t}:=\log(I_{i,t}+\varepsilon),
\qquad t=2,\ldots,t_0,
\end{equation}
where \(\varepsilon>0\) is a small numerical constant. Let
\[
\mathcal R_0:=\{2,\ldots,t_0\},
\qquad
\mathcal R_0^{\mathrm c}:=\{t_0+1,\ldots,t_{\max}\}.
\]
Let \(q_{\max}=5\). For each \(r\in\{1,\ldots,q_{\max}\}\), let
\(
\phi_r(t)
=
\bigl(\phi_{r,0}(t),\phi_{r,1}(t),\ldots,\phi_{r,r}(t)\bigr)^\top
\)
be a vector of basis functions, e.g., the centered polynomial basis
\(\phi_{r,\ell}(t)=(t-t_0)^\ell\). Denote the full order-\(r\) coefficient space by
\(\Gamma_r=\mathbb R^{r+1}\), and define the monotone class
\[
\Gamma_r^{\mathrm{mon}}
:=
\left\{
\gamma\in\Gamma_r:
\{\phi_r(t)-\phi_r(t+1)\}^\top\gamma\ge0,\ 
t=2,\ldots,t_{\max}-1
\right\}.
\]
The polynomial order controls the flexibility of the horizontal extrapolation: larger
orders can capture changing decay rates or mild curvature, but they also introduce more
coefficients to estimate from the short pre-decision history. For each image \(i\), we
therefore choose the order image-wise by cross-validation over its observed pre-decision
residual history. Let \((\mathcal A_b,\mathcal V_b)\), \(b=1,\ldots,B\), be folds with
\(\mathcal A_b\subset\mathcal R_0\) and \(\mathcal V_b=\mathcal R_0\setminus\mathcal A_b\).
We restrict the candidate order set to
\[
\mathcal Q
:=
\left\{
r\in\{1,\ldots,q_{\max}\}:
r+1\le \min_{1\le b\le B}|\mathcal A_b|
\right\},
\]
so that every training-fold polynomial design has full column rank. For each order
\(r\in\mathcal Q\) and fold \(b\), fit the monotone curve using only \(\mathcal A_b\):
\[
\hat\gamma_{i,r}^{(-b),\mathrm{mon}}
\in
\arg\min_{\gamma\in\Gamma_r^{\mathrm{mon}}}
\sum_{t\in\mathcal A_b}
\left\{
H_{i,t}-\phi_r(t)^\top\gamma
\right\}^2 .
\]
This is a small convex quadratic program with linear constraints. Only a few polynomial coefficients are optimized, and the problem can be solved
efficiently by a standard active-set or interior-point quadratic-programming solver.
We set \(\hat q_i\in\arg\min_{r\in\mathcal Q}\ell_{i,H}^{\mathrm{cv}}(r)\), choosing
the smallest minimizer if there is a tie, where
\[
\ell_{i,H}^{\mathrm{cv}}(r)
=
\frac1B
\sum_{b=1}^B
\frac1{|\mathcal V_b|}
\sum_{t\in\mathcal V_b}
\left\{
H_{i,t}-\phi_r(t)^\top\hat\gamma_{i,r}^{(-b),\mathrm{mon}}
\right\}^2
\]
is the image-wise cross-validation loss. Simple residual histories tend to select a
lower-order curve, while histories with more curvature select a larger order. For
calibration images, the fitted curves and validation losses are computed and stored once;
for a new test image, only that image's quantities are computed.
After selecting \(\hat q_i\), we solve the same monotone least-squares problem on the
full pre-decision index set \(\mathcal R_0\), and denote the resulting coefficient vector
by \(\hat\gamma_{i,\hat q_i}^{\mathrm{mon}}\).

When the selected order is large enough to contain the true trend, the
constrained fit satisfies a sharper deterministic extrapolation bound on
\(\mathcal R_0^{\mathrm c}\) than the unconstrained fit, and the improvement is the
part of the unconstrained fit removed by the projection, namely the local upward movements that are inconsistent with the dominant decaying trend;
\Cref{prop:monotone_projection_gain} in Appendix \ref{app:monotone_projection}
quantifies this benefit, and there we also discuss how the constraint eases the
bias--variance trade-off in selecting \(\hat q_i\).

Combining \eqref{eq:l2_band_decomposition_residual_state} and \eqref{equ:def_H}, we use the
extrapolated future band residuals as a horizontal proxy for the unresolved residual tail.
For \(t=t_0,\ldots,t_{\max}\), define
\[
\hat L_i^H(t)
:=
\sum_{s=t+1}^{t_{\max}}\hat I_{i,s}^H,
\qquad
\hat I_{i,s}^H
:=
\max\left\{
\exp\!\left(\phi_{\hat q_i}(s)^\top
\hat\gamma_{i,\hat q_i}^{\mathrm{mon}}\right)
-\varepsilon,
0
\right\}.
\]
This proxy treats the bands already acquired by time \(t\) as contributing little to the
loss, and approximates the residual left in band \(s>t\) by the value it takes just before
band \(s\) is acquired. The corresponding horizontal plug-in stopping time is
\begin{equation}\label{equ:T_H}
\hat T_i^H
=
\min\left\{
t\in\{t_0,\ldots,t_{\max}\}:
\hat L_i^H(t)\le c
\right\},
\end{equation}
with the convention \(\hat T_i^H=t_{\max}\) if the set is empty.

\subsection{Vertical calibration of the stopping time}
\label{subsec:vertical}
So far we have predicted the stopping time \(T_{n+1}\) by the horizontal prediction
\(\hat T_{n+1}^H\). Because only a short residual history is available at the decision time \(t_0\), the
horizontal predictions may be less accurate for images with complex structures. We
next use the observed stopping times \(T_1,\ldots,T_n\) of the calibration images to estimate and correct the
remaining bias of the horizontal prediction.

To measure image complexity, we compute the entropy of the
reconstruction as follows. Suppose that image intensities are scaled to \([0,1]\). Define
the 16 intensity bins
\(
B_b=[(b-1)/16,b/16),
b=1,\ldots,15,
\)
and
\(
B_{16}=[15/16,1].
\)
For the reconstructed image \(\hat Y_i(t_0)\), define its normalized 16-bin entropy by
\[
\hat\Omega_i
:=
\operatorname{Ent}_{16}\{\hat Y_i(t_0)\}
=
-\frac{1}{\log 16}
\sum_{b:q_b\{\hat Y_i(t_0)\}>0}
q_b\{\hat Y_i(t_0)\}
\log q_b\{\hat Y_i(t_0)\},
\]
where
\(
q_b\{\hat Y_i(t_0)\}
=
K^{-1}
\sum_{k\in[K]}
\mathbbm{1}\{\hat Y_{i,k}(t_0)\in B_b\}.
\)
In the experiments, we will evaluate coverage across images with different true-image
entropies \(\Omega_i=\operatorname{Ent}_{16}(Y_i)\). This true-image entropy is used only for
evaluation and is not available to the stopping rule.
A more task-specific measure of image complexity is the horizontal prediction itself.
A larger value indicates that the early residual trajectory calls for more acquisition.
We combine these quantities to define the state
\[
S_i
:=
\left(
\hat T_i^H,\hat\Omega_i
\right)^\top
\in\mathbb R^2.
\]
In the experiments, the horizontal coordinate is linearly rescaled to \([0,1]\)
before distances are computed; this deterministic rescaling is equivalent to a
corresponding rescaling of \(h_T\). Let \(g=(h_T,h_\Omega)\) denote a bandwidth pair, where \(h_T>0\) and \(h_\Omega>0\). We
define the bandwidth-scaled distance between images \(j\) and \(k\) by
\begin{equation}
\label{eq:anisotropic_vertical_distance}
\ell_{jk}(g)
:=
h_T^{-2}
\left(\hat T_j^H-\hat T_k^H\right)^2
+
h_\Omega^{-2}
\left(\hat\Omega_j-\hat\Omega_k\right)^2.
\end{equation}
A smaller \(h_T\) requires tighter matching in the horizontal prediction, whereas a
smaller \(h_\Omega\) requires tighter matching in decision-time reconstruction entropy.
For a fixed bandwidth pair \(g\), define the leave-one-out weights
\begin{equation}
\label{eq:kl_vertical_weights_closed_form}
\hat\omega_{j,k}(g)
=
\frac{
\exp\{-\ell_{jk}(g)\}
}{
\displaystyle
\sum_{l\in[n+1]\setminus\{j\}}
\exp\{-\ell_{jl}(g)\}
},
\qquad
k\in[n+1]\setminus\{j\}.
\end{equation}
Thus, image \(k\) receives greater weight \(\hat\omega_{j,k}(g)\) when its horizontal
prediction and decision-time entropy are closer to those of image \(j\).

Because the stopping rule is deployed only after observing the reconstruction path up
to \(t_0\), it cannot stop before \(t_0\). We therefore define the floored target
stopping time
\[
T_i^\star
:=
\max\{T_i,t_0\}.
\]
For each candidate value \(t\in\{t_0,\ldots,t_{\max}\}\), define the augmented floored
labels
\[
T_k^\star(t)
=
\begin{cases}
T_k^\star, & k=1,\ldots,n,\\
t, & k=n+1.
\end{cases}
\]
For exact conformal validity, the bandwidth-selection algorithm must treat the
calibration and test observations symmetrically. Let \(\mathcal G\) be a
prespecified finite grid of strictly positive bandwidth pairs, and let
\(\mathcal A\) denote the algorithm that maps the augmented sample
\(
\{(S_1,T_1^\star),\ldots,(S_n,T_n^\star),(S_{n+1},t)\}
\)
to a selected bandwidth pair \(\hat g(t)\in\mathcal G\). We require the algorithm
\(\mathcal A\) to be permutation invariant:
\begin{equation}
\label{equ:sym}
\mathcal A
\left(
(s_{\sigma(1)},t_{\sigma(1)}^\star(t)),
\ldots,
(s_{\sigma(n+1)},t_{\sigma(n+1)}^\star(t))
\right)
=
\mathcal A
\left(
(s_1,t_1^\star(t)),
\ldots,
(s_{n+1},t_{n+1}^\star(t))
\right)
\end{equation}
for any values \((s_1,t_1^\star(t)),\ldots,(s_{n+1},t_{n+1}^\star(t))\) of the \(n+1\)
observations and any permutation \(\sigma\) of \([n+1]\). 
For the remainder of the construction, write
\[
\hat\omega_{j,k}(t)
:=
\hat\omega_{j,k}\{\hat g(t)\}.
\]
Although the bandwidth pair is reselected for every candidate \(t\), the selection criterion is a squared-error loss in \(t-\hat T_{n+1}^H\), and
Appendix~\ref{app:bandwidth_selection} gives an online algorithm that updates the selection
across candidates efficiently. As a result, the per-test-image cost is linear in the calibration sample size for a fixed bandwidth grid \(\mathcal G\) and a fixed candidate range.
The same appendix also describes an approximation that selects the bandwidth pair once for all candidates \(t\) using only the calibration stopping times. Under the conditions stated there, the
calibration-only and exact procedures select the same bandwidths with probability tending to one. However, the calibration-only approximation is not directly covered by the validity result below. 

For each augmented point \(j\), define the leave-one-out horizontal--vertical prediction
\begin{equation}
\label{eq:hv_prediction}
\hat T_j^{HV}(t)
=
\hat T_j^H
+
\sum_{k\in[n+1]\setminus\{j\}}
\hat\omega_{j,k}(t)
\left\{
T_k^\star(t)-\hat T_k^H
\right\}.
\end{equation}
The first term, \(\hat T_j^H\), is the horizontal prediction for image \(j\). The
second term is a vertical correction that averages the horizontal prediction errors
\(T_k^\star(t)-\hat T_k^H\) among images with states close to \(S_j\). If the
horizontal model tends to predict stopping times that are too early for images similar
to \(j\), the correction is positive and moves \(\hat T_j^{HV}(t)\) later. If the
horizontal model tends to predict stopping times that are too late, the correction is
negative and moves the prediction earlier.

Define the corresponding residual score after the vertical correction:
\begin{equation}
\label{eq:hv_residual_score}
R_j(t)
:=
T_j^\star(t)-\hat T_j^{HV}(t)
=
e_j(t)
-
\sum_{k\ne j}
\hat\omega_{j,k}(t)e_k(t),
\end{equation}
where \(e_j(t):=T_j^\star(t)-\hat T_j^H\) is the horizontal residual under candidate
\(t\). This residualization is intended to improve local calibration. The raw horizontal
residual may contain systematic bias that varies across image complexities. The weighted
average \(\sum_{k\ne j}\hat\omega_{j,k}(t)e_k(t)\) estimates this bias using images with
similar states. 

If the correction removes most of the state-dependent bias and \(t\) is a typical value of \(T_{n+1}^\star\), the test residual
\(R_{n+1}(t)\) should be comparable with \(R_1(t),\ldots,R_n(t)\).
Conformal prediction retains the candidates \(t\) for which \(R_{n+1}(t)\) is not unusually
large. More precisely, the conformal \(p\)-value
for candidate \(t\) is given by 
\[
p^{HV}(t)
=
\frac{1}{n+1}
\sum_{j=1}^{n+1}
\mathbbm{1}
\left\{
R_j(t)\ge R_{n+1}(t)
\right\}.
\]
A small value of \(p^{HV}(t)\) means that \(R_{n+1}(t)\) is
unusually large relative to the other scores, so candidate \(t\) is rejected. If
\(p^{HV}(t)>\alpha\), then \(t\) is retained as a plausible value of \(T_{n+1}^\star\). The final conformalized stopping rule is
\begin{equation}
\label{equ:u_alpha}
\hat U_{n+1}^{HV}
=
\max\left\{
t\in\{t_0,\ldots,t_{\max}\}:
p^{HV}(t)>\alpha
\right\},
\end{equation}
with the convention that \(\hat U_{n+1}^{HV}=t_{\max}\) if the retained set is empty.
Taking the largest retained candidate guarantees that
\(\hat U_{n+1}^{HV}\ge T_{n+1}^\star\) whenever \(T_{n+1}^\star\) is retained, which is
what the coverage statement below requires.
\begin{theorem}[Marginal validity]
\label{thm:hv_marginal_validity}
Assume that the images \(Y_1,\ldots,Y_{n+1}\) are exchangeable, that the
bandwidth-selection algorithm satisfies the symmetry condition in \eqref{equ:sym},
and that the loss \(L_i(t)\) in \eqref{equ:lit} is nonincreasing in \(t\) with
probability one. Then the horizontal--vertical stopping rule
\(\hat U_{n+1}^{HV}\) in \eqref{equ:u_alpha} satisfies
\[
\Pp\left\{
L_{n+1}(\hat U_{n+1}^{HV})\le c
\right\}
\ge
1-\alpha.
\]
\end{theorem}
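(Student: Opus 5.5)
The plan is to reduce the loss statement to a coverage statement for the floored stopping time, and then prove that coverage by the standard full-conformal exchangeability argument evaluated at the oracle candidate $t=T_{n+1}^\star$.

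\emph{Step 1: reduction to coverage.} Since $T_{n+1}\le T_{n+1}^\star$ and $L_{n+1}$ is nonincreasing in $t$ with probability one, the event $\{T_{n+1}^\star\le \hat U_{n+1}^{HV}\}$ implies
\[
L_{n+1}(\hat U_{n+1}^{HV})\le L_{n+1}(T_{n+1})\le c .
\]
So it suffices to show $\Pp\{T_{n+1}^\star\le\hat U_{n+1}^{HV}\}\ge1-\alpha$. Because $\hat U_{n+1}^{HV}$ is the largest retained candidate (and equals $t_{\max}$ if none is retained), the event $p^{HV}(T_{n+1}^\star)>\alpha$ forces $T_{n+1}^\star\le\hat U_{n+1}^{HV}$. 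Here $T_{n+1}^\star\in\{t_0,\ldots,t_{\max}\}$ is a legitimate candidate because the target level is attainable on the grid. It therefore remains to show $\Pp\{p^{HV}(T_{n+1}^\star)\le\alpha\}\le\alpha$.

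\emph{Step 2: exchangeability of the oracle scores.} At $t=T_{n+1}^\star$ the augmented labels are the true floored labels $T_k^\star$ for all $k\in[n+1]$. Each pair $Z_k=(S_k,T_k^\star)$ is obtained by applying one fixed map to $Y_k$: the reconstruction model $\hat\mu$ is trained on separate data, the fixed grid and mask are applied, then the image-wise cross-validated monotone fit, and finally the entropy. Exchangeability of $Y_1,\ldots,Y_{n+1}$ therefore transfers to $Z_1,\ldots,Z_{n+1}$.

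Next I will show that the score vector $(R_1,\ldots,R_{n+1})$ at this candidate is a permutation-equivariant function of $(Z_1,\ldots,Z_{n+1})$. By the symmetry condition \eqref{equ:sym}, the selected bandwidth $\hat g$ is invariant under permutations of the $Z_k$. Given $\hat g$, the weights $\hat\omega_{j,k}$ in \eqref{eq:kl_vertical_weights_closed_form} depend only on the states $S_j,S_k$ and on the multiset $\{S_l\}_{l\ne j}$. Hence $R_j$ in \eqref{eq:hv_residual_score} has the form $R_j=f(Z_j;\{Z_l\}_{l\ne j})$ for one symmetric function $f$. This equivariance step is the one I expect to need the most care. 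Its key point is that the bandwidth is selected from the whole augmented sample, including the oracle test label, rather than from the calibration points alone. This is exactly why the calibration-only approximation falls outside the theorem.

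\emph{Step 3: quantile argument.} Exchangeability of the $Z_k$ combined with equivariance makes $(R_1,\ldots,R_{n+1})$ exchangeable. The standard rank lemma then gives
\[
\Pp\Bigl\{\tfrac{1}{n+1}\textstyle\sum_{j}\mathbbm 1\{R_j\ge R_{n+1}\}\le\alpha\Bigr\}\le\alpha,
\]
and ties only help because the indicator uses $\ge$. To prove the lemma, I will condition on the multiset of scores. Under exchangeability, $R_{n+1}$ is then uniform over the $n+1$ positions, and at most $\lfloor\alpha(n+1)\rfloor$ of those positions can have an upper-tail count of at most $\alpha(n+1)$. Combining this bound with Steps 1 and 2 yields $\Pp\{L_{n+1}(\hat U_{n+1}^{HV})\le c\}\ge1-\alpha$.
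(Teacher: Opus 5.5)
Your proposal is correct and follows essentially the same route as the paper's proof. Both evaluate the scores at the oracle candidate $t=T_{n+1}^\star$ and use the symmetry condition \eqref{equ:sym} to obtain exchangeable scores. Both then apply the tie-safe rank-counting lemma (at most $\lfloor \alpha(n+1)\rfloor$ indices can have small upper-tail count) and finish with the largest-retained-candidate property and loss monotonicity.
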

The theorem gives a marginal reconstruction-error guarantee without requiring the
horizontal extrapolation model to be correctly specified. We next show that our method has approximate conditional validity across images
with different states. Write \(e_j^\star:=T_j^\star-\hat T_j^H\) for the horizontal
residual at the true floored stopping time.

\begin{assumption}
\label{ass:feature_bias_model}
There is a bounded positive-definite kernel
\(\kappa:\mathcal S\times\mathcal S\to\mathbb R\) on the state space
\(\mathcal S\), whose associated reproducing kernel Hilbert space (RKHS)
\(\mathcal H_\kappa\) contains a function \(m_n\) such that
\[
e_j^\star
=
m_n(S_j)+\xi_j,
\qquad
j=1,\ldots,n+1,
\qquad
\|m_n\|_{\mathcal H_\kappa}
\le
\Lambda_{\kappa,n}.
\]
The error terms \(\xi_1,\ldots,\xi_{n+1}\) are i.i.d., mean-zero,
\(\sigma^2\)-sub-Gaussian, and independent of the states
\(S_1,\ldots,S_{n+1}\). They have a common density \(f_\xi\) satisfying
\(\|f_\xi\|_\infty\le M\).
\end{assumption}

Here, \(\|\cdot\|_{\mathcal H_\kappa}\) denotes the norm induced by the RKHS
inner product associated with \(\kappa\), which satisfies
\(
\langle
\kappa(s,\cdot),\kappa(s',\cdot)
\rangle_{\mathcal H_\kappa}
=
\kappa(s,s').
\)
The RKHS assumption is flexible: depending on the kernel, \(\mathcal H_\kappa\) may be a finite-dimensional feature space or an infinite-dimensional one, such as a Sobolev RKHS.
The function \(m_n(S_j)\) represents the part of the horizontal prediction error
that can be explained by the observed state, whereas \(\xi_j\) represents the
remaining idiosyncratic error. For example, two images with the same state may still differ in the fine-scale structure remaining in their unacquired bands, and \(\xi_j\) captures this unexplained variation.

The bounded-density condition is an idealized continuous approximation used to
simplify the exposition of the conditional-coverage theorem below. In the implementation, \(e_j^\star\) is grid-valued because both \(T_j^\star\) and \(\hat T_j^H\) are evaluated on a discrete acquisition grid. The proof of the theorem first establishes a more general
bound in terms of the concentration function of \(\xi_j\), which does not
require continuity. Under the bounded-density
condition, the probability that \(\xi_j\) lies in an interval of radius \(r\)
is at most \(2Mr\). For a discrete error distribution, the general bound reflects the
probability of exact ties, which need not vanish as the perturbation radius
approaches zero.

For the following result, fix a bandwidth pair
\(g=(h_T,h_\Omega)\) and suppress its dependence in the weights. For each \(j\in[n+1]\), let
\(\widehat P_j=\sum_{k\ne j}\hat\omega_{j,k}\,\delta_{S_k}\) be the weighted
neighborhood used to correct the horizontal prediction at \(S_j\). Define the
largest local maximum mean discrepancy (MMD) between a state and its
neighborhood,
\[
D_{\kappa,n}
:=
\max_{1\le j\le n+1}
\operatorname{MMD}_\kappa\bigl(\delta_{S_j},\widehat P_j\bigr)
=
\max_{1\le j\le n+1}
\Bigl\|
\kappa(S_j,\cdot)-\sum_{k\ne j}\hat\omega_{j,k}\,\kappa(S_k,\cdot)
\Bigr\|_{\mathcal H_\kappa}.
\]
By the dual representation of the MMD,
\[
D_{\kappa,n}
=
\max_{1\le j\le n+1}\;
\sup_{\|f\|_{\mathcal H_\kappa}\le1}
\Bigl|f(S_j)-\sum_{k\ne j}\hat\omega_{j,k}\,f(S_k)\Bigr|.
\]
Thus, \(D_{\kappa,n}\) measures how accurately each weighted neighborhood
reproduces evaluations of unit-norm RKHS functions at its target state.

\begin{theorem}[Approximate conditional validity]
\label{thm:hv_approx_conditional_coverage}
Suppose \Cref{ass:feature_bias_model} holds and that the loss \(L_i(t)\) in
\eqref{equ:lit} is nonincreasing in \(t\) with probability one. Let
\(\hat U_{n+1}^{HV}(g)\) denote the horizontal--vertical stopping rule in
\eqref{equ:u_alpha} using a fixed bandwidth pair
\(g=(h_T,h_\Omega)\). For \(n\ge2\), there exists a constant
\(\eta_g>0\), independent of \(n\), such that
\[
\Pp\left\{
L_{n+1}\{\hat U_{n+1}^{HV}(g)\}\le c
\,\middle|\,
S_{n+1}
\right\}
\ge
1-\alpha
-
4M\Lambda_{\kappa,n}
\mathbb E\left[
D_{\kappa,n}
\,\middle|\,
S_{n+1}
\right]
-
\eta_g
\sqrt{\frac{\log n}{n}}.
\]
\end{theorem}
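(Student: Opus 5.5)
First reduce to coverage of the floored stopping time. By loss monotonicity, \(T_{n+1}^\star\le \hat U_{n+1}^{HV}(g)\) implies \(T_{n+1}\le \hat U_{n+1}^{HV}(g)\), hence \(L_{n+1}\{\hat U_{n+1}^{HV}(g)\}\le c\). By the definition \eqref{equ:u_alpha} as the largest retained candidate, \(T_{n+1}^\star\le \hat U_{n+1}^{HV}(g)\) holds whenever \(p^{HV}(T_{n+1}^\star)>\alpha\). It therefore suffices to lower bound \(\Pp\{p^{HV}(T_{n+1}^\star)>\alpha\mid S_{n+1}\}\).

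At \(t=T_{n+1}^\star\) all labels are true, so \(R_j=e_j^\star-\sum_{k\ne j}\hat\omega_{j,k}e_k^\star\). With a fixed bandwidth the weights depend only on the states. Substituting \Cref{ass:feature_bias_model} gives
\[
R_j=\xi_j-\sum_{k\ne j}\hat\omega_{j,k}\xi_k+b_j,\qquad b_j:=m_n(S_j)-\sum_{k\ne j}\hat\omega_{j,k}m_n(S_k).
\]
By the reproducing property and the dual MMD form, \(|b_j|\le \Lambda_{\kappa,n}D_{\kappa,n}\) for every \(j\). Next, control the averaged noise \(\bar\xi_j:=\sum_{k\ne j}\hat\omega_{j,k}\xi_k\). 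Conditionally on the states, \(\bar\xi_j\) is sub-Gaussian with variance proxy \(\sigma^2\sum_k\hat\omega_{j,k}^2\). For a fixed positive bandwidth the Gaussian kernel values are bounded below on the bounded, rescaled state space, so \(\max_k\hat\omega_{j,k}\le C_g/n\). A union bound over \(j\) then gives \(\max_j|\bar\xi_j|\le r_n:=C'_g\sqrt{\log n/n}\), except on an event of probability \(O(1/n)\).

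The main step, and the main obstacle, is comparing the perturbed scores with the exchangeable scores \(\xi_j\). On the good event, \(R_j\ge R_{n+1}\) holds whenever
\[
\xi_j\ge \xi_{n+1}+2\Lambda_{\kappa,n}D_{\kappa,n}+2r_n .
\]
The \(\xi_j\) are i.i.d. and independent of the states, so conditionally on \(S_{n+1}\) they are still exchangeable. Define \(\tilde p:=\frac1{n+1}\sum_j \mathbbm 1\{\xi_j\ge\xi_{n+1}+\delta\}\) with \(\delta=2\Lambda D+2r_n\). This gives \(p^{HV}\ge \tilde p\).

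The obstacle is that \(\delta\) is random and depends on all the states. I would first handle a deterministic \(\delta\). In that case, \(\Pp\{\tilde p\le\alpha\}\) is at most \(\Pp\{\tilde p_0\le\alpha\}\) plus the probability that some order statistic crossing lies within \(\delta\). Here \(\tilde p_0\) is the \(\delta=0\) version, and \(\Pp\{\tilde p_0\le\alpha\}\le\alpha\) by exchangeability. The extra term is controlled by the concentration function \(\sup_x\Pp\{\xi_{n+1}\in[x,x+\delta]\}\le M\delta\), applied to \(\xi_{n+1}\) relative to the empirical \((1-\alpha)\) quantile of the others. The quantile is independent of \(\xi_{n+1}\) after leave-one-out, with a DKW-type \(\sqrt{\log n/n}\) correction.

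For the random \(\delta\), I would condition on all states, since the noises are independent of them. Then \(D_{\kappa,n}\) is fixed, and averaging over the states given \(S_{n+1}\) produces \(\mathbb E[D_{\kappa,n}\mid S_{n+1}]\). The \(\Lambda D\) part contributes \(2M\cdot2\Lambda D=4M\Lambda_{\kappa,n}D_{\kappa,n}\). The \(r_n\) term, the union-bound failure probability, and the quantile-fluctuation term are absorbed into \(\eta_g\sqrt{\log n/n}\), with \(\eta_g\) depending on \(g\), \(\sigma\) and \(M\) but not on \(n\).

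One subtlety remains. Conditional exchangeability given all states is not immediate, because given all states the noises are i.i.d. but the bias terms \(b_j\) are not symmetric. The argument avoids needing it, since it uses only the pure-noise exchangeable score and the uniform bias bound.
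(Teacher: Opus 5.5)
Your argument is correct. Up to the rank comparison it coincides with the paper's proof: the same reduction to \(p^{HV}(T_{n+1}^\star)>\alpha\), the same split \(R_j=\xi_j+b_j-\bar\xi_j\), the same reproducing-property bound \(|b_j|\le\Lambda_{\kappa,n}D_{\kappa,n}\), and the same kernel lower bound plus sub-Gaussian union bound on \(\max_j|\bar\xi_j|\).

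The two proofs differ in how the perturbed ranks are compared with the ideal ones. The paper writes \(p^{HV}(t^\star)\ge p(\xi)-A_n\), where \(A_n\) counts two-sided near ties \(|\xi_j-\xi_{n+1}|\le 2r_{n,g}\). It bounds \(A_n\) by Hoeffding's inequality given \(\xi_{n+1}\) and the states, at a cost of \(d_\xi(2r_{n,g})+\sqrt{\log n/n}\le 4Mr_{n,g}+\sqrt{\log n/n}\). You instead use a one-sided shifted statistic and condition on \(\xi_1,\dots,\xi_n\), so the relevant order statistic becomes a fixed threshold independent of \(\xi_{n+1}\).

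Your route is sharper than you credit. Keep the self-indicator \(\mathbbm{1}\{R_{n+1}\ge R_{n+1}\}=1\) in \(\tilde p\); as you defined it, \(\tilde p\) drops this term, which costs an extra \(1/(n+1)\) under the bounded-density assumption. Let \(k=\lfloor\alpha(n+1)\rfloor\) and let \(q_k\) be the \(k\)-th largest of \(\xi_1,\dots,\xi_n\). Then \(\{\tilde p\le\alpha\}\setminus\{\tilde p_0\le\alpha\}\) is exactly \(\{q_k-\delta<\xi_{n+1}\le q_k\}\), even with ties. Its conditional probability is at most \(M\delta=2M\Lambda_{\kappa,n}D_{\kappa,n}+2Mr_n\).

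So no DKW correction is needed, and the leading coefficient is \(2M\) rather than \(4M\). Your ``\(2M\cdot 2\Lambda D\)'' double counts, though the stated bound still follows from the sharper one. The paper's two-sided formulation through \(d_\xi\) makes the extension to grid-valued residuals explicit. Your route extends equally well, via \(\sup_x\Pp\{\xi_1\in(x-\delta,x]\}\le d_\xi(\delta/2)\), while avoiding Hoeffding and halving the constant.
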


\Cref{thm:hv_approx_conditional_coverage} bounds the conditional coverage gap
by the sum of a state-dependent component
\(
4M\Lambda_{\kappa,n}
\mathbb E[D_{\kappa,n}\mid S_{n+1}]
\)
and a stochastic component of order
\(\sqrt{\log(n)/n}\). The overall rate is governed by whichever component
vanishes more slowly.

In the state-dependent component, \(\Lambda_{\kappa,n}\) measures the
magnitude of the systematic error remaining after horizontal extrapolation,
whereas \(D_{\kappa,n}\) measures how well the vertical weights match the
states. Because \(\kappa\) is bounded, \(D_{\kappa,n}\) is uniformly bounded.
Consequently, the state-dependent component vanishes if
\(\Lambda_{\kappa,n}\to0\), even when the states are not matched perfectly.
Conversely, if \(\Lambda_{\kappa,n}=O(1)\), it vanishes whenever
\(
\mathbb E[D_{\kappa,n}\mid S_{n+1}]\to0.
\)
When both factors decrease, their rates multiply.

The stochastic component arises from two sources: the weighted noise averages
\(
\sum_{k\ne j}\hat\omega_{j,k}\xi_k
\)
and the empirical number of near ties in the conformal rank comparison.
For a
fixed positive bandwidth pair, boundedness of the state space implies that each
kernel weight is of order \(1/n\), so the weighted averages concentrate at the
usual \(n^{-1/2}\) rate. Controlling all \(n+1\) leave-one-out averages and the
near-tie count introduces the additional \(\sqrt{\log n}\) factor.
In particular, if
\(
\Lambda_{\kappa,n}
\mathbb E[D_{\kappa,n}\mid S_{n+1}]
\)
is of order \(\sqrt{\log(n)/n}\) or smaller, then the upper bound on the
conditional coverage gap also converges at the near-parametric rate
\(\sqrt{\log(n)/n}\).

In standard conformal prediction, conditional validity is stated given a test covariate
\(X_{n+1}\). Here the conditioning variable is the state
\(S_{n+1}=(\hat T_{n+1}^H,\hat\Omega_{n+1})^\top\), computed from the reconstruction path
up to the decision time \(t_0\), so the coverage guarantee conditions on exactly the information the stopping rule can use. A larger \(t_0\) may make the state more informative and the conditioning more
image-specific, at the cost of acquiring more of every image before deciding.

The symmetry condition in \eqref{equ:sym} gives marginal validity by
preserving exchangeability of the augmented residual scores.
\Cref{thm:hv_approx_conditional_coverage} asks for more: its proof treats the weights as
fixed when applying sub-Gaussian concentration to the residual noise, whereas a
candidate-specific bandwidth selected using the augmented stopping-time labels can depend
on the very noise being averaged, so the weighted concentration argument no longer applies. A bound uniform over the bandwidth grid would avoid this, but its MMD term would
be set by the worst bandwidth in the grid, which may never be selected in practice. We
therefore state the result for a fixed bandwidth pair.

\section{Experiments}
\label{sec:experiments}
We evaluate CoRAS on two datasets: Fashion-MNIST, a standard image benchmark, and M4Raw, a brain MRI dataset with raw \(k\)-space measurements. On both datasets, we compare three stopping rules: the raw fixed-rate rule in \eqref{equ:vraw}, calibrated using the raw loss in \eqref{equ:lraw}; the model fixed-rate rule in \eqref{equ:fixed_rate}; and the rate-adaptive rule in \eqref{equ:u_alpha}, calibrated from the loss in \eqref{equ:lit}.
The bandwidth in the adaptive rule is selected via the leave-one-out
error for predicting the stopping time; see Appendix \ref{app:bandwidth_selection} for more details.

We report three performance metrics for each method, writing \(\hat U_i\) generically
for its selected stopping time. The coverage rate is the fraction of test images with
\(\hat U_i \ge T_i\), which under the monotone loss assumption is equivalent to the
 reconstruction loss being at most \(c\). For CoRAS, this coincides with
\(\hat U_i \ge T_i^\star\), since \(\hat U_i^{HV}\ge t_0\) by construction. The
sampling rate is the mean selected rate \(\theta_{\hat U_i}\). The excess
sampling rate is the mean of \(\theta_{\hat U_i}-\theta_{T_i^\star}\), where the
floored target stopping time \(T_i^\star\) is computed from the loss in \eqref{equ:lit} for every
method, so that sampling efficiency is compared on a common scale. Code to reproduce the experiments is available at \url{https://github.com/jiawei-yang05/Conformalized_Rate_Adaptive_Sensing}.

\subsection{Fashion-MNIST}
\begin{figure}[h]
\centering
\begin{subfigure}{0.32\textwidth}
    \centering
    \includegraphics[width=\linewidth]{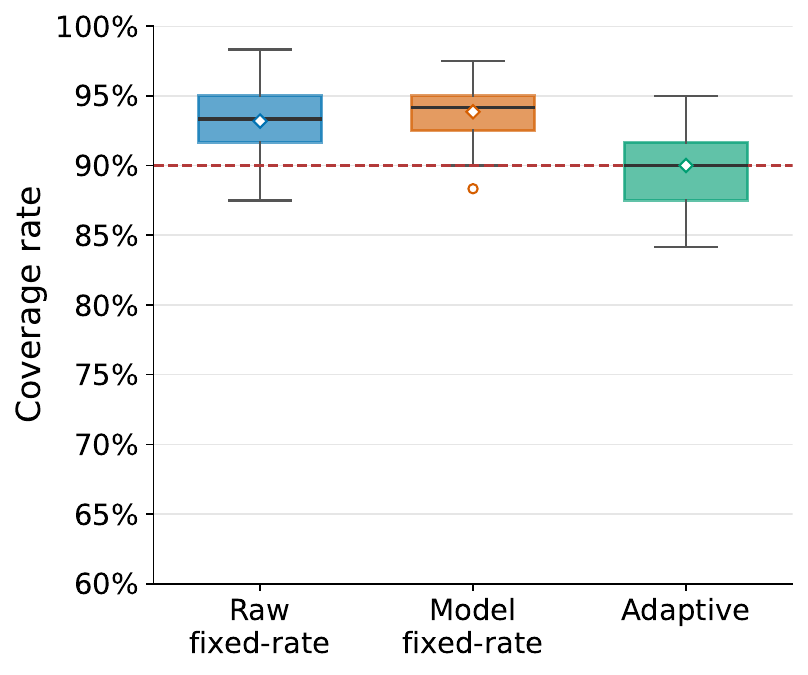}
    \caption{Coverage rate.}
\end{subfigure}
\hfill
\begin{subfigure}{0.32\textwidth}
    \centering
    \includegraphics[width=\linewidth]{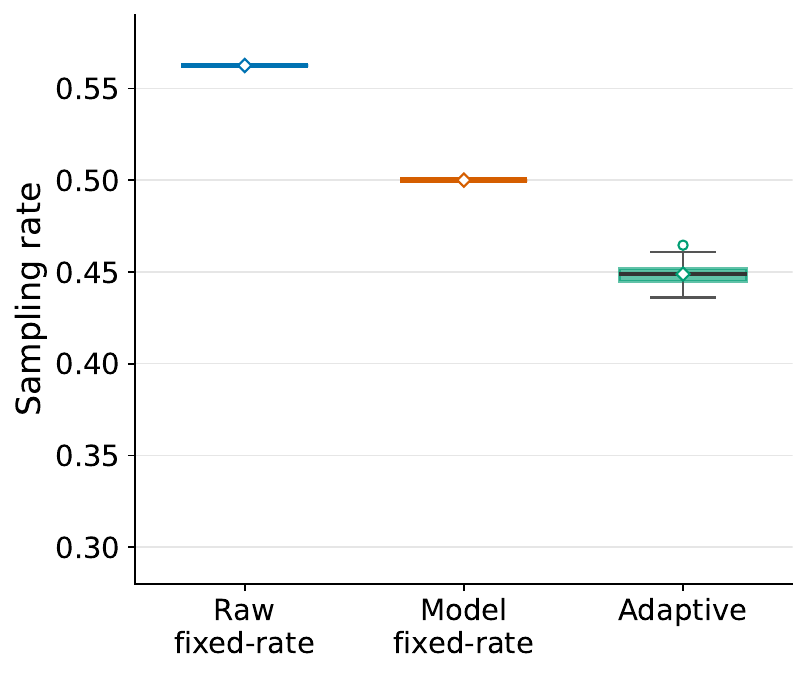}
    \caption{Sampling rate.}
\end{subfigure}
\hfill
\begin{subfigure}{0.32\textwidth}
    \centering
    \includegraphics[width=\linewidth]{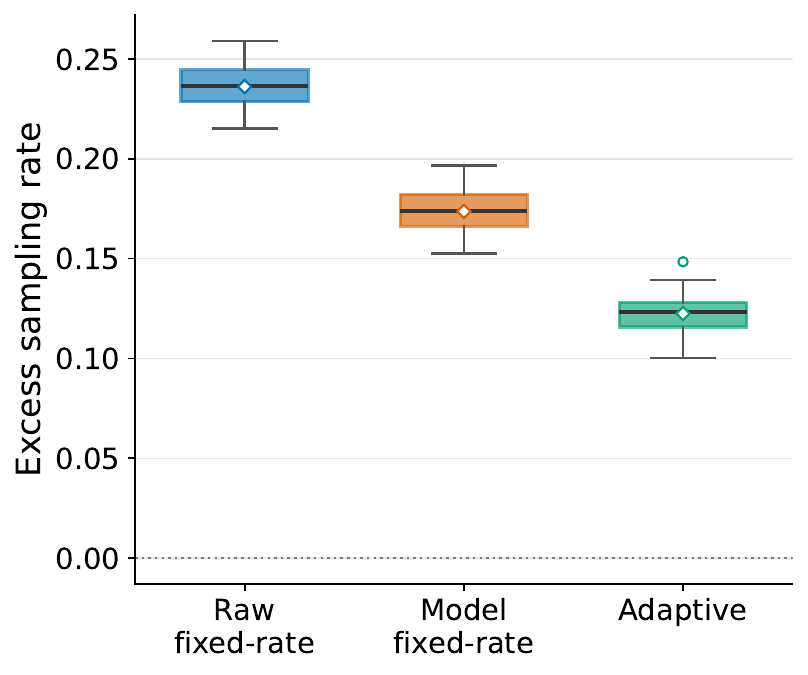}
    \caption{Excess sampling rate.}
\end{subfigure}
\caption{Performance of stopping rules on Fashion-MNIST over 50 independent runs.}
\label{fig:fashion-final-boxplots}
\end{figure}
Fashion-MNIST \citep{xiao2017fashion} consists of grayscale images of clothing items
such as sneakers, shirts, coats, and ankle boots. Each image has size \(28\times 28\);
we resize all images to \(32\times 32\) with antialiasing, so that the width matches
the 32-column Fourier acquisition grid, and all reconstructions and losses are
evaluated at this resolution. In Fashion-MNIST, even images of the same clothing item
can differ substantially in complexity, as shown in \Cref{fig:mnistimages} in
Appendix~\ref{app:mnist}. This variety makes it well suited for testing
whether the proposed rule adapts the sampling rate across test images.

As described in \Cref{sect:setup}, we sample each image starting from the
lowest-frequency column and add columns in order of frequency magnitude. The rate grid
has 32 steps, indexed by \(t=1,\ldots,32\), with \(\theta_t=t/32\), so that step \(t\)
corresponds to acquiring the \(t\) lowest-frequency columns. The reconstruction model
is trained on 2000 separate training images under the same acquisition order. The task
is to reconstruct each image with per-pixel squared error below \(c=0.003\) with
probability at least \(1-\alpha\), where \(\alpha=0.1\).
We use two separate sets of 6000 images, one for calibration and one for testing. We
set the decision time \(t_0=6\), at which 20.5\% of the test images meet the target
\(c\), so most images still require a nontrivial stopping decision.
\begin{figure}[t]
\centering
\begin{subfigure}{0.7\textwidth}
    \centering
    \includegraphics[width=\linewidth]{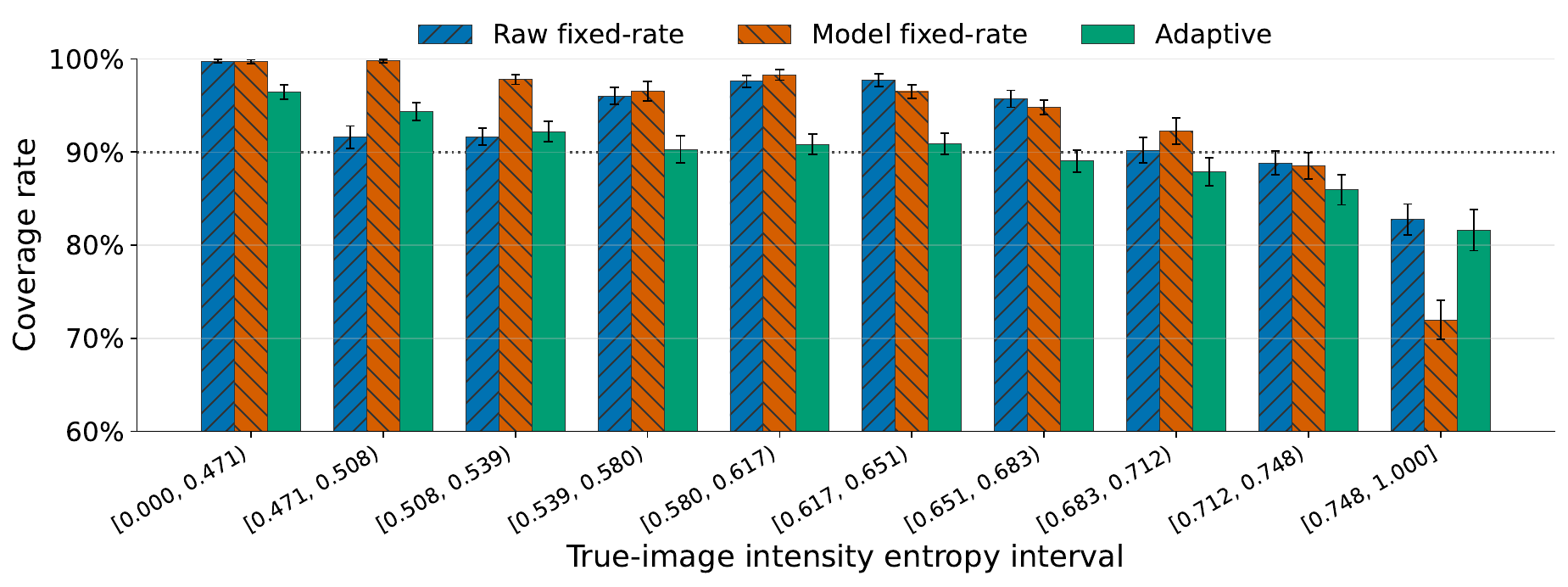}
    \caption{Coverage rate.}
\end{subfigure}
\begin{subfigure}{0.7\textwidth}
    \centering
    \includegraphics[width=\linewidth]{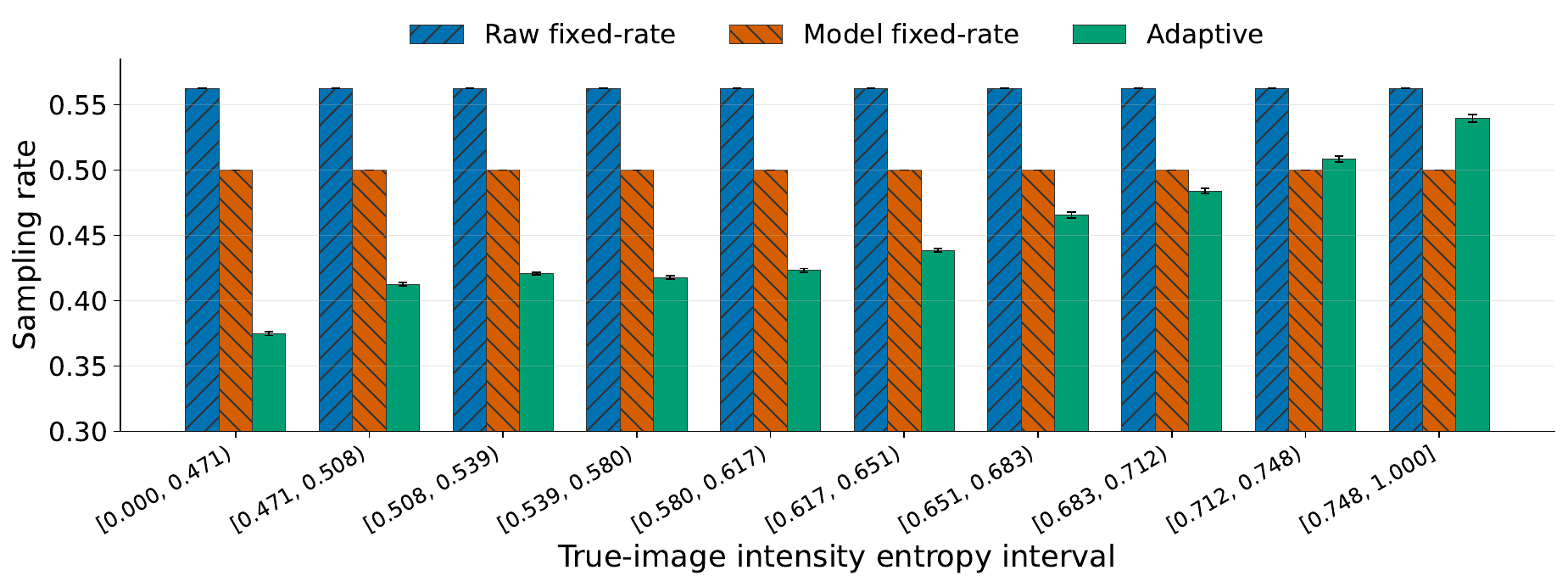}
    \caption{Sampling rate.}
\end{subfigure}
\begin{subfigure}{0.7\textwidth}
    \centering
    \includegraphics[width=\linewidth]{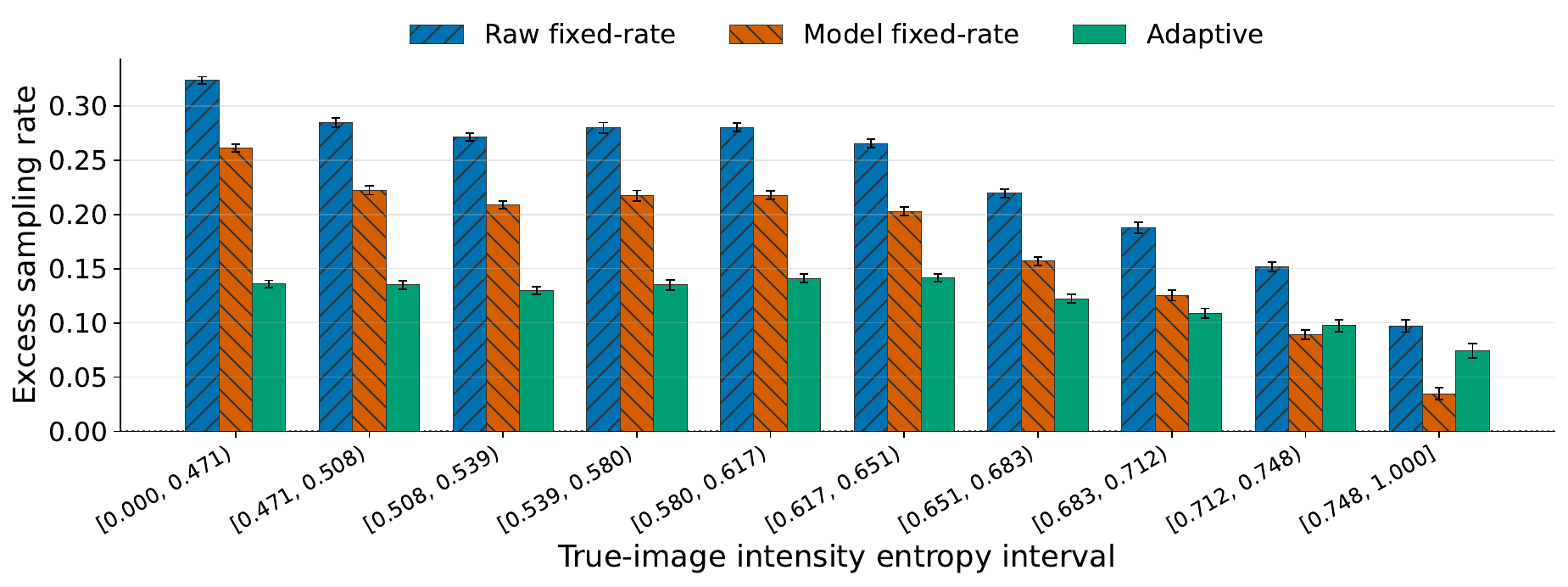}
    \caption{Excess sampling rate.}
\end{subfigure}
\caption{Performance of stopping rules over true-image entropy in Fashion-MNIST.}
\label{fig:fashion-final-entropy}
\end{figure}

\Cref{fig:fashion-final-boxplots}
reports the marginal results of the three stopping rules. Panel (a) shows that all three rules control the coverage rate at or above the target level \(1-\alpha=0.9\). The adaptive rule reaches this level with smaller sampling and excess-sampling rates than the two fixed-rate rules, as shown in panels (b) and (c).

\Cref{fig:fashion-final-entropy} evaluates the three rules across images with various complexities. The test images are divided into ten groups based on their true-image entropy \(\Omega_i\), each group containing 10\% of the test images. Panel (a) shows that the coverage of every rule stays above the target level 0.9 over most of the entropy range, but falls below 0.9 for the high-entropy images. This happens even though the adaptive rule increases its sampling rate as the true entropy grows, as shown in panel (b), while the fixed-rate rules stay constant by construction. Panel (c) shows that the adaptive rule achieves the lowest excess sampling rate over the whole entropy range. Panels (b) and (c) together show that the rule adapts in the intended direction, spending more measurements on complex images and fewer on simple ones, and that the extra sampling on complex images is not wasted, since the adaptive rule stops closest to the target stopping time at every entropy level.
\Cref{fig:fashion-final-correlation} in Appendix~\ref{app:mnist} shows that the decision-time entropy and the true-image entropy are highly correlated. The state can therefore identify complex images, and the remaining coverage gap at high entropy suggests that their large target stopping times are harder to predict.
\subsection{M4Raw brain MRI}
\begin{figure}[ht]
\centering
\begin{subfigure}{0.32\textwidth}
    \centering
    \includegraphics[width=\linewidth]{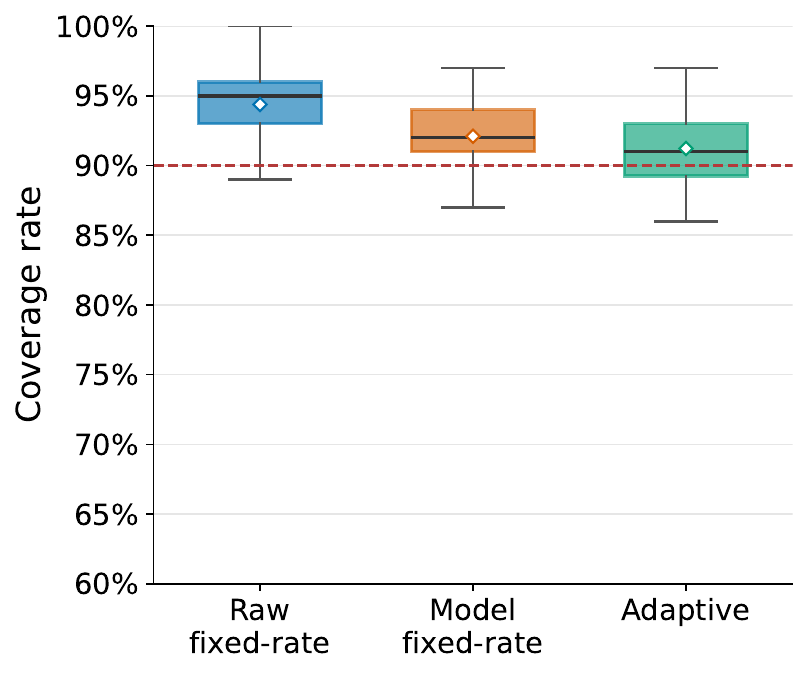}
    \caption{Coverage rate.}
\end{subfigure}
\hfill
\begin{subfigure}{0.32\textwidth}
    \centering
    \includegraphics[width=\linewidth]{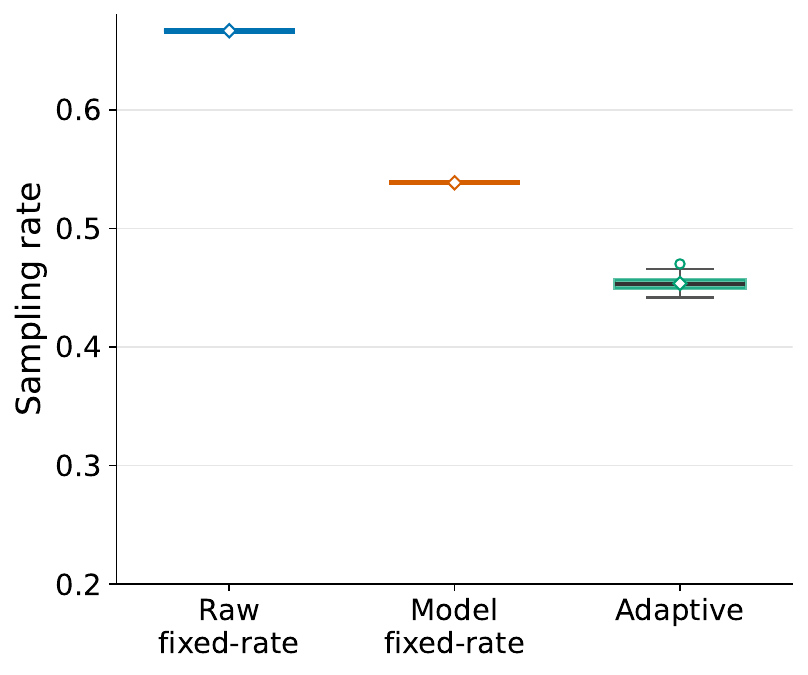}
    \caption{Sampling rate.}
\end{subfigure}
\hfill
\begin{subfigure}{0.32\textwidth}
    \centering
    \includegraphics[width=\linewidth]{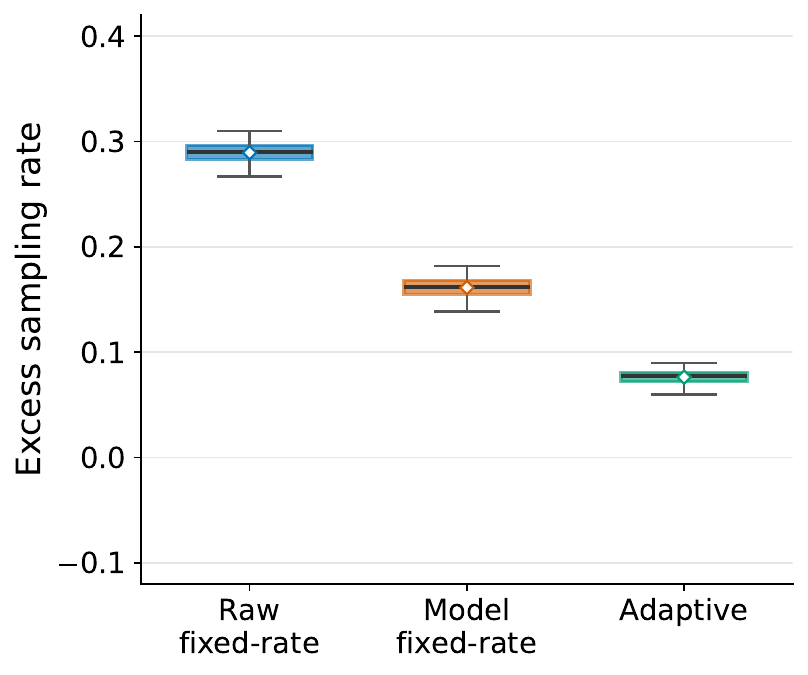}
    \caption{Excess sampling rate.}
\end{subfigure}
\caption{
Performance of stopping rules on M4Raw over 50 independent runs.}
\label{fig:m4raw-final-run-summary}
\end{figure}
M4Raw \citep{lyu2023m4raw} is a multi-coil brain MRI dataset that provides raw \(k\)-space measurements together with reference reconstructions. Each study contains 18 slices, and each slice is a \(256\times256\) image measured by four coils. The slices range from the top of the brain to the skull base, so their anatomical complexity varies widely both across and within studies; see \Cref{fig:m4raw-slices} in Appendix~\ref{sect:M4Raw} for reference slices.

We acquire each slice by adding full phase-encoding lines in order from the lowest to the highest frequency. Each slice has 195 active phase columns, and the sampling rate is the number of acquired lines divided by 195. The rate grid has 38 steps, from 16 lines up to all 195 lines.
The reconstruction model is trained on 72 studies that are disjoint from the 30 calibration studies and the 56 test studies. The task is to reconstruct each slice with per-pixel squared error below \(c=0.0015\) with probability at least \(1-\alpha\), where \(\alpha=0.1\). We use 30 studies with 540 slices for calibration and 56 disjoint studies with 1008 slices for testing, splitting at the study level so that slices from the same study never appear in both sets.
We set \(t_0=9\), corresponding to 50 acquired lines or rate \(0.256\), at which 16.7\% of the test slices meet the target \(c\). Because the multi-coil acquisition does not directly measure the band coefficients of
the RSS reference, the next-band residuals in this experiment are computed from the
fully sampled reference, and the M4Raw evaluation is retrospective
(Appendix~\ref{sect:M4Raw}).

\Cref{fig:m4raw-final-run-summary} reports the marginal results of the three stopping rules on M4Raw. Panel (a) shows that all three rules control the coverage rate at or above the target level \(1-\alpha=0.9\), and the adaptive rule reaches this level with smaller sampling and excess-sampling rates than the two fixed-rate rules, as shown in panels (b) and (c).

\begin{figure}[t]
\centering
\begin{subfigure}{0.57\textwidth}
    \centering
    \includegraphics[width=\linewidth]{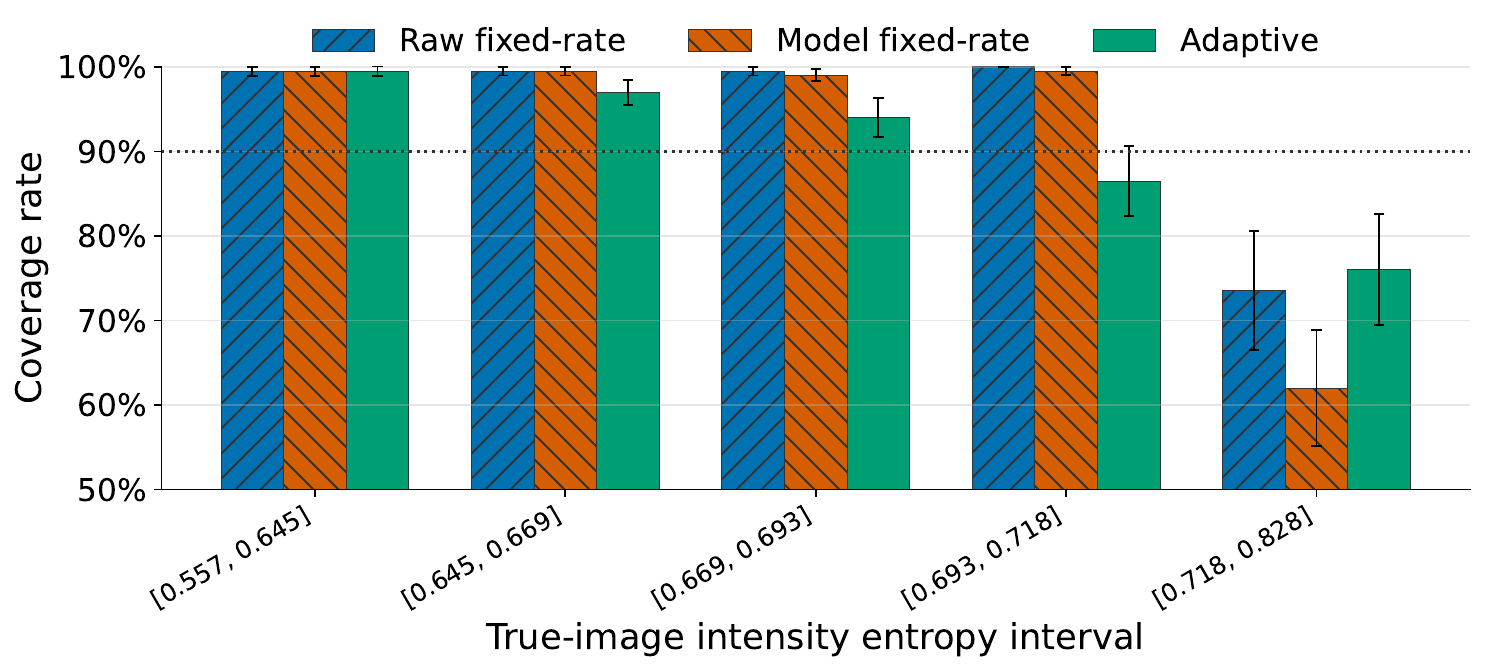}
    \caption{Coverage rate.}
\end{subfigure}
\begin{subfigure}{0.57\textwidth}
    \centering
    \includegraphics[width=\linewidth]{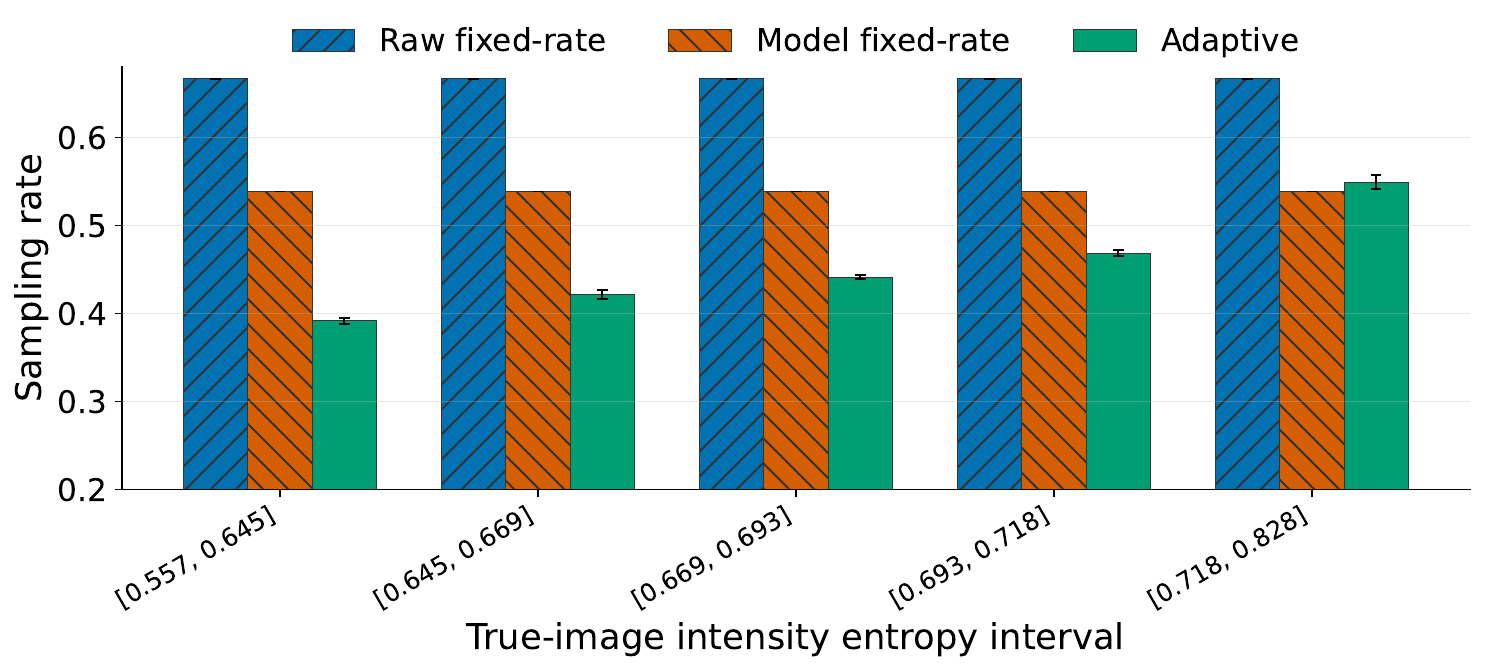}
    \caption{Sampling rate.}
\end{subfigure}
\begin{subfigure}{0.57\textwidth}
    \centering
    \includegraphics[width=\linewidth]{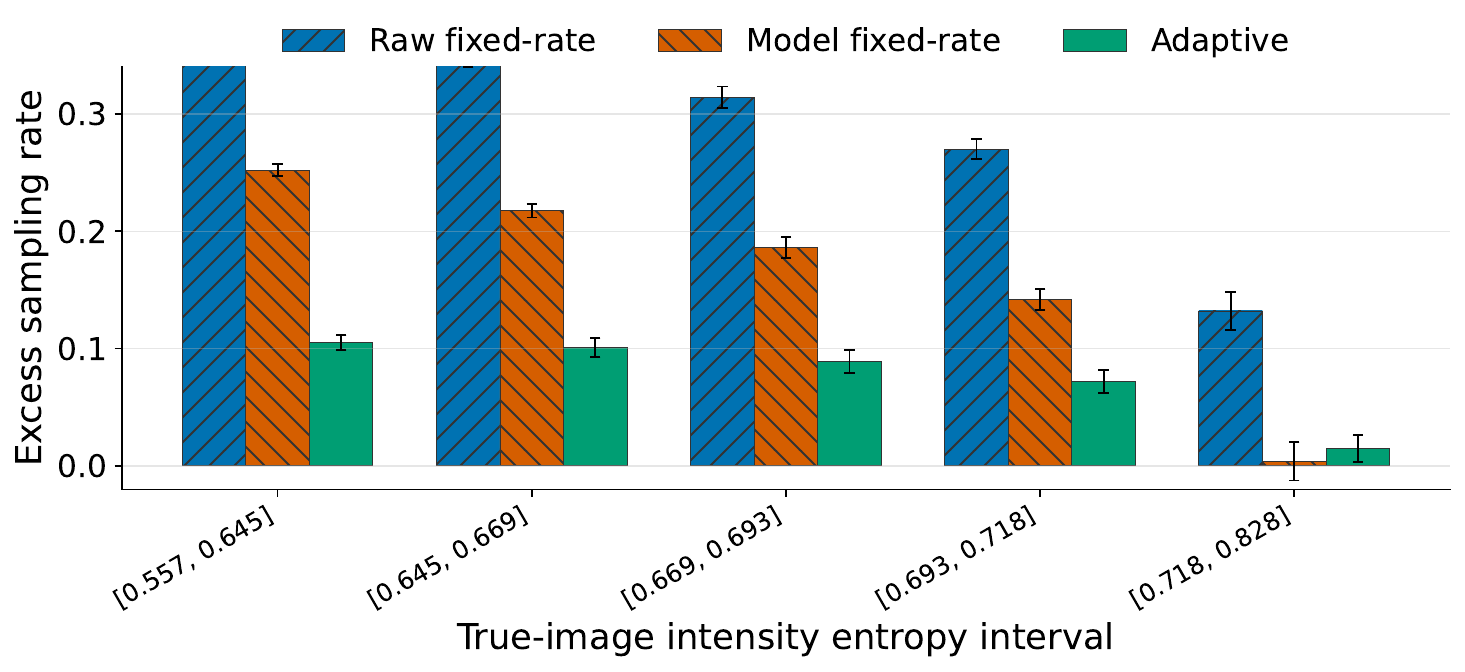}
    \caption{Excess sampling rate.}
\end{subfigure}
\caption{Performance of stopping rules over true-image entropy in M4Raw.}
\label{fig:m4raw-final-entropy}
\end{figure}

\Cref{fig:m4raw-final-entropy} evaluates the three rules across image complexities. The test slices are divided into five groups based on their true-image entropy, each group containing 20\% of the test slices. Panel (a) shows that the conditional coverage stays above the target level 0.9 over most of the entropy range and falls only for the highest-entropy slices; panel (b) shows that the adaptive rule increases its sampling rate as the true entropy grows,  and panel (c) shows that the adaptive rule keeps the lowest excess sampling rate over nearly the whole entropy range. As on Fashion-MNIST, panels (b) and (c) together show that the extra sampling on complex slices is well spent.
\Cref{fig:m4raw-final-correlation} in Appendix \ref{sect:M4Raw} shows that the decision-time entropy and the true-image entropy are also highly correlated in M4Raw. The coverage gap at high entropy reflects the difficulty of predicting the target stopping times of complex slices.

\section{Discussion}
\label{sec:discussion}
This paper introduced Conformalized Rate-Adaptive Sensing (CoRAS), a method that chooses the acquisition rate one image at a time while keeping the reconstruction error below a target level with high probability. CoRAS extrapolates the observed reconstruction path to predict the target stopping time, then corrects that prediction using calibration images with similar states. In the experiments on Fashion-MNIST and M4Raw brain MRI, CoRAS achieved the target stopping-time coverage while spending fewer measurements on average than the fixed-rate rules. This gain comes from taking more measurements on the complex images and fewer on the simple ones.

One limitation of CoRAS is that the decision time \(t_0\) is fixed and the rule stops only once. A more flexible design would revisit the decision at several points along the acquisition path, updating the state each time and asking again whether to continue. Reusing conformal prediction this way may break the finite-sample guarantee, so developing sequentially valid stopping rules is a natural next step.

The construction of CoRAS can reduce cost wherever measurement or computation is bought one step at a time. A large language model can reason for longer before it answers, and one would like to stop as soon as the answer is good enough. In spectroscopy, a sample is scanned repeatedly and the scans are averaged, and one would like to stop once the spectrum is clean enough to identify the compound. Both problems have the structure CoRAS calibrates: a cost paid along a path, and a decision made before the outcome is seen. What CoRAS would add in either case is a stopping rule that carries a guarantee, rather than a threshold set by hand.

\bibliographystyle{plainnat}
\bibliography{references}

\clearpage

\appendix
\appendixpage

\section{Experiment details}
\label{app:experiment_details}

\subsection{Fashion-MNIST}\label{app:mnist}

The reconstruction model is a convolutional network trained for ten epochs on the 2000 training images, with sampling rates drawn from the acquisition grid. \Cref{fig:mnistimages} shows reference test images from the clothing classes used in this experiment.

\begin{figure}[H]
\centering
\includegraphics[width=0.9\textwidth]{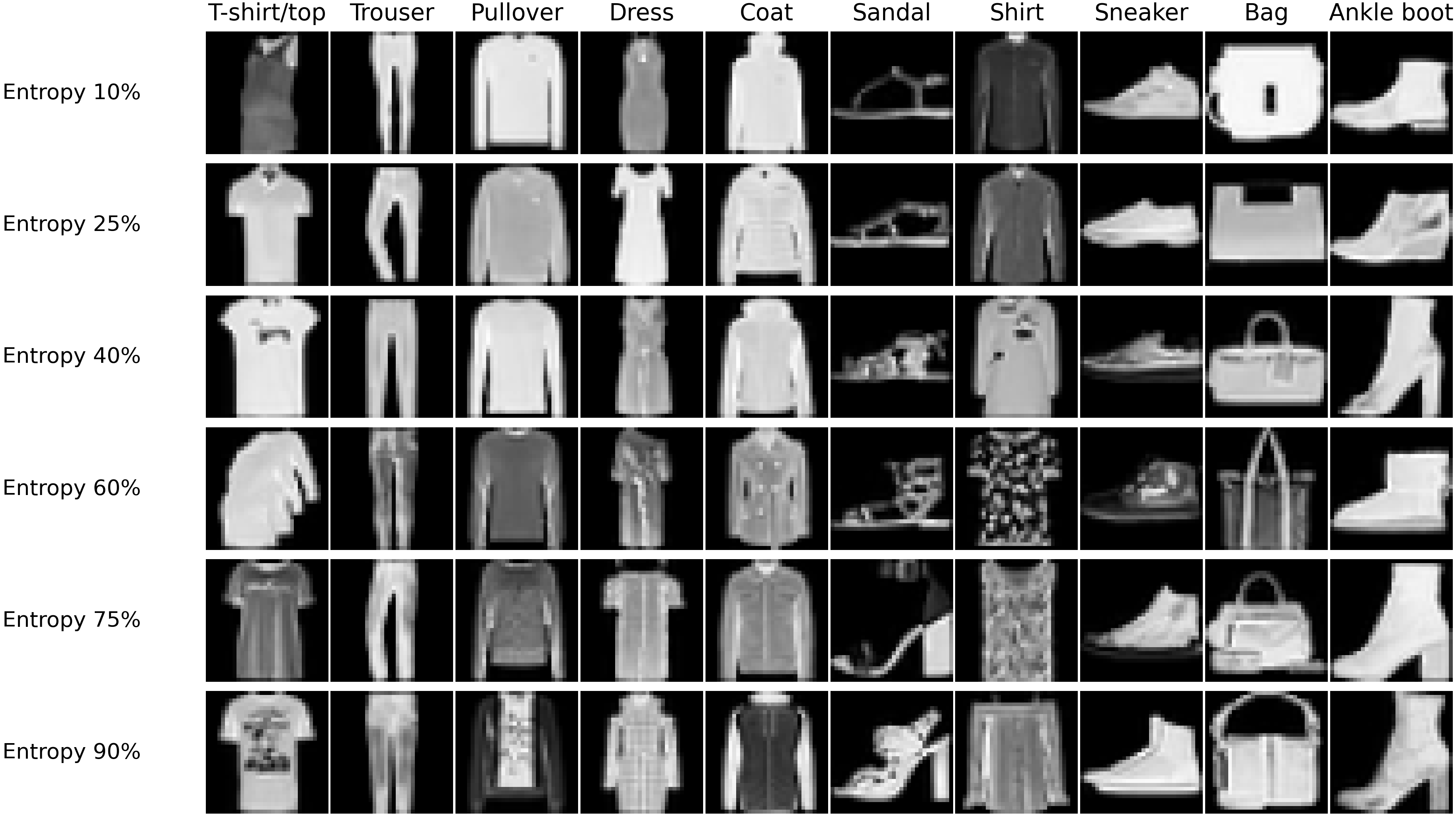}
\caption{Reference Fashion-MNIST test images spanning the clothing classes.}
\label{fig:mnistimages}
\end{figure}

\paragraph{State and bandwidth selection.}
The state uses the horizontal prediction \(\widehat T_i^H\), which lies in
\([t_0,t_{\max}]\) by \eqref{equ:T_H}, and the 16-bin entropy \(\widehat\Omega_i\) of
the reconstruction at \(t_0\). The horizontal coordinate is mapped to \([0,1]\) as
\((\widehat T_i^{H}-t_0)/(t_{\max}-t_0)\), and the entropy coordinate is already
normalized to \([0,1]\). On these prespecified coordinates, the distance in \eqref{eq:anisotropic_vertical_distance} uses the squared bandwidths
\[
(h_T^2,h_\Omega^2)\in \{0.0125\times2^m:m=-4,\ldots,4\}^2.
\]
For each test image and candidate value of the stopping time, the rule reselects the bandwidth pair using the online algorithm described in Appendix \ref{app:efficient_online_bandwidth}.

\Cref{fig:fashion-final-correlation} plots the entropy of the reconstruction at \(t_0\) against the true-image entropy on the 6000 test images. The two are strongly correlated, with Pearson correlation \(0.935\) and Spearman correlation \(0.931\). The decision-time entropy is therefore a reliable proxy for the complexity of the true image, which is not observed at test time, and this supports its use in the state \(S_i\).

\begin{figure}[t]
\centering
\includegraphics[
 width=0.9\textwidth,
    trim={0 0 0 7cm},
    clip
]{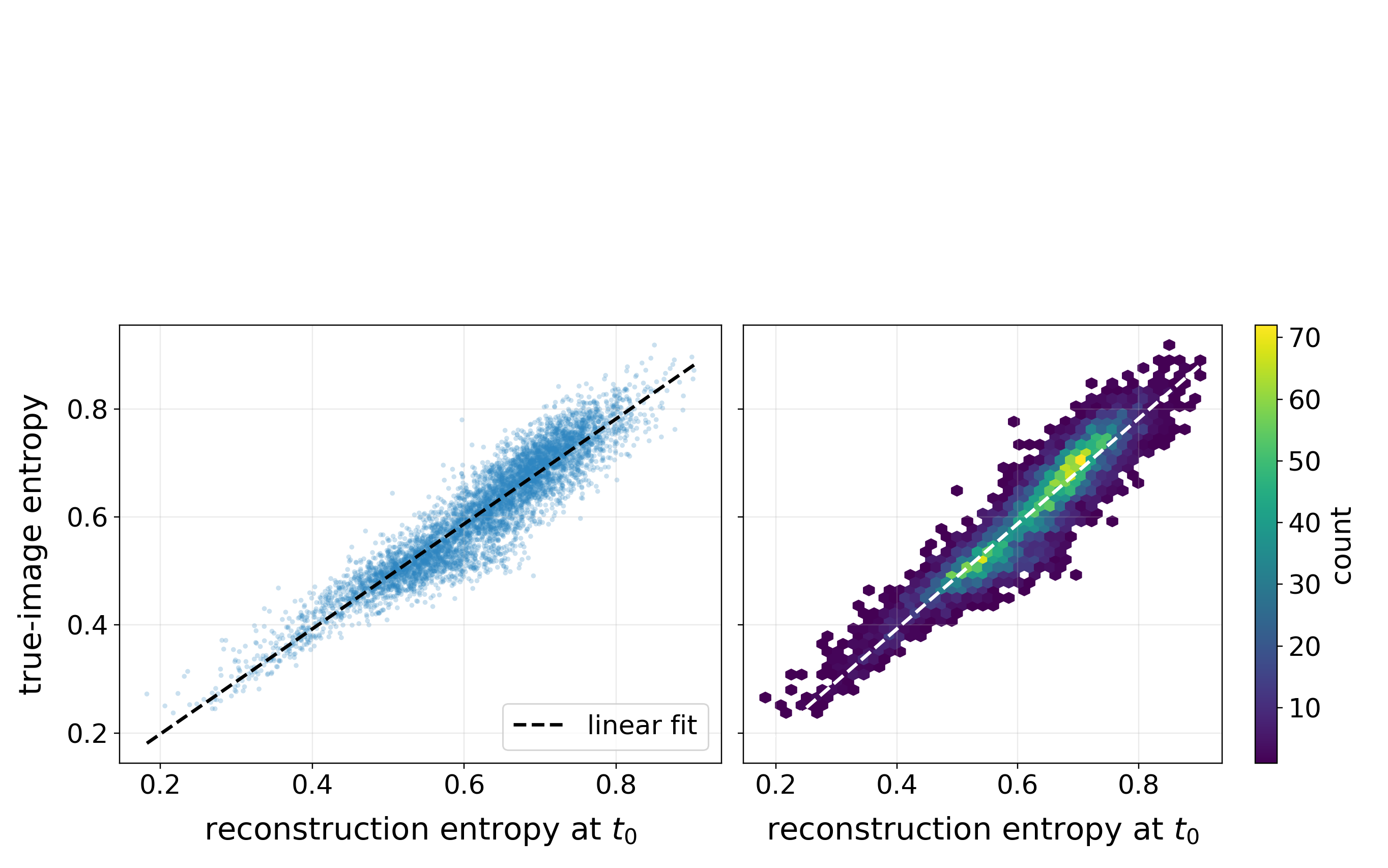}
\caption{Reconstruction entropy at \(t_0\) versus true-image entropy on the 6000 Fashion-MNIST test images. Left: scatter plot with a linear fit. Right: hexagonal bin density.}
\label{fig:fashion-final-correlation}
\end{figure}

\subsection{M4Raw brain MRI}\label{sect:M4Raw}

The reference reconstructions are the root-sum-of-squares (RSS) images provided with the dataset. For every slice, the RSS target and the zero-filled reconstructions are scaled by their 99.5th intensity percentile and clipped to \([0,1]\). \Cref{fig:m4raw-slices} shows reference slices from eight test studies.

\begin{figure}[ht]
\centering
\includegraphics[width=0.9\textwidth]{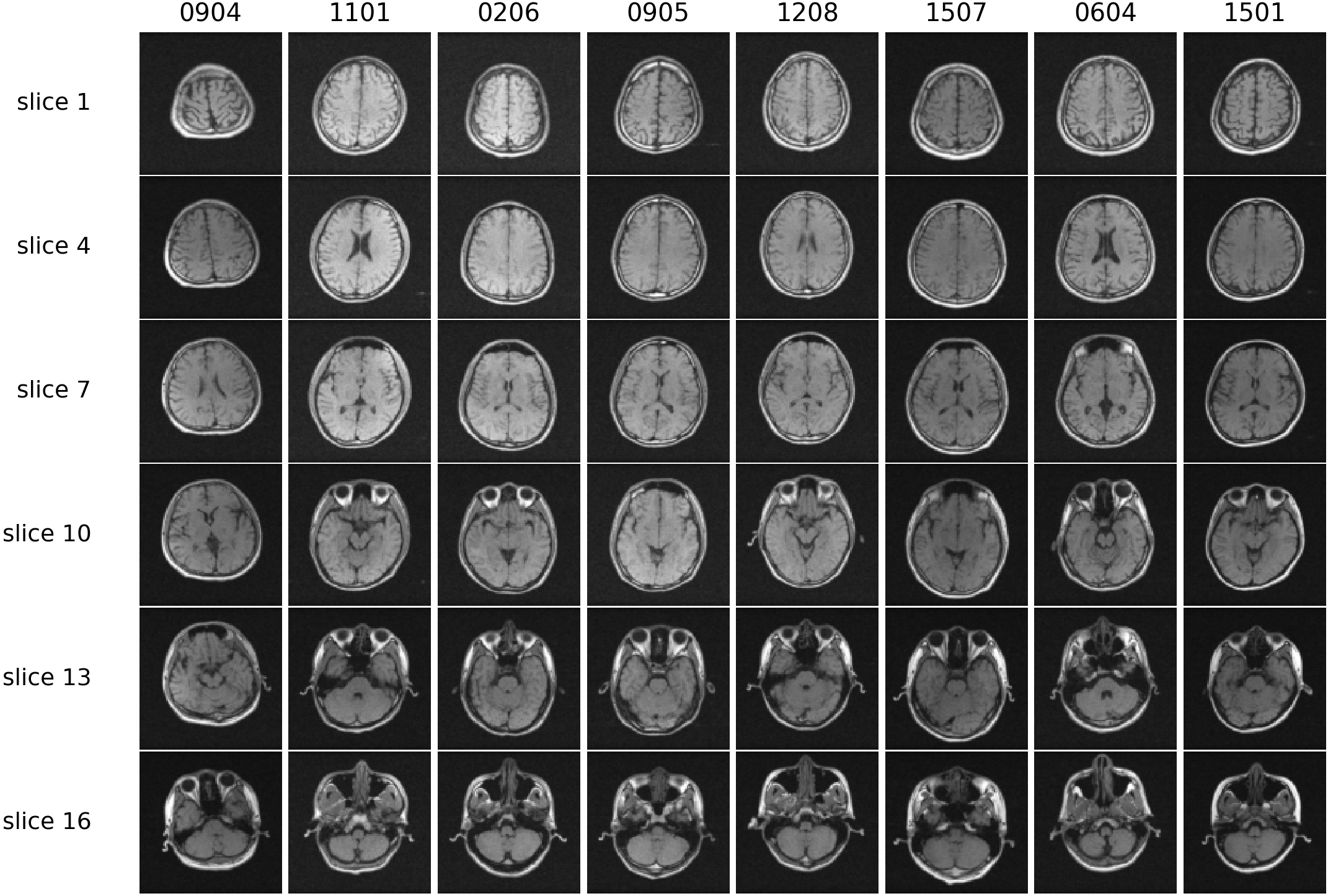}
\caption{Reference RSS reconstructions from eight M4Raw test studies (columns) at six slice positions (rows). Slices vary from the top of the brain to the skull base, and anatomical complexity varies both across slice positions and across studies.}
\label{fig:m4raw-slices}
\end{figure}

The acquisition path contains 38 nested line counts,
\[
\begin{split}
&16,20,24,28,32,36,40,45,50,55,60,65,70,75,80,85,90,95,100,105,110,115,120,\\
&125,130,135,140,145,150,155,160,165,170,175,180,185,190,195.
\end{split}
\]
\paragraph{Next-band residuals.}
Let \(Y_i^{\mathrm{RSS}}\in[0,1]^{H\times W}\) denote the normalized fully sampled RSS
reference for slice \(i\), where \(H=W=256\) are the image height and width, and let
\(\widehat Y_i^{\mathrm{RSS}}(t-1)\) be the single-channel model reconstruction at
step \(t-1\). Let \(\mathcal N_t\) denote the set of phase columns newly acquired at
step \(t\), and \(P_{\mathcal N_t}\) the restriction of a Fourier array to those
columns. With \(\mathcal F_{\mathrm c}\) the centered orthonormal two-dimensional
Fourier transform, we define
\[
I_{i,t}^{\mathrm{MRI}}
=
\frac{1}{K}
\left\|
P_{\mathcal N_t}
\mathcal F_{\mathrm c}
\left\{
Y_i^{\mathrm{RSS}}-\widehat Y_i^{\mathrm{RSS}}(t-1)
\right\}
\right\|_2^2,
\qquad
K=HW.
\]
This is the single-channel definition \eqref{eq:next_band_prediction_residual} applied
to the RSS image, with \(\Delta P_t\) the projection onto the newly added phase
columns; it replaces \(I_{i,t}\) in \eqref{equ:def_H}. The normalization by the full
dimension \(K\) follows the same convention as
\eqref{eq:next_band_prediction_residual}, so the band energies and the extrapolated
tail \(\hat L_i^H(t)\) are on the same scale as the reconstruction loss and the
threshold \(c\). The acquisition increments contain four or five columns, and a wider
increment is intentionally allowed to contribute more residual energy.
Unlike the single-channel Fourier setting, the Fourier coefficients of the RSS image in
the new columns are not directly measured by the multi-coil acquisition, because the
RSS combination is nonlinear across coils. The M4Raw experiment therefore computes
\(I_{i,t}^{\mathrm{MRI}}\) from the fully sampled reference, and the evaluation is
retrospective.

The reconstruction model is a residual-gated U-Net. The model is trained for 15 epochs with a squared loss, together with monotonicity and zero-filled anchor penalties. Its three input channels are the normalized zero-filled RSS reconstruction \(X_{\mathrm{zf}}(\theta)\), the sampling mask \(M_\theta\), and a constant \(\theta\)-plane. The model output is
\[
\hat Y_\phi(\theta)
=
\operatorname{clip}_{[0,1]}
\left[
X_{\mathrm{zf}}(\theta)
+(1-\theta)\,
\Delta_\phi\{X_{\mathrm{zf}}(\theta),M_\theta,\theta\}
\right],
\]
where \(\Delta_\phi\) is the learned residual. The network therefore does not predict the image from scratch; it predicts a correction to the zero-filled reconstruction, which already carries the acquired measurements. The gate \((1-\theta)\) shrinks that correction as the sampling rate grows, so the output returns to the zero-filled reconstruction when the acquisition is nearly complete and the correction is no longer needed. The clip keeps the output in the intensity range \([0,1]\) of the normalized images. 

\paragraph{State and bandwidth selection.}
The adaptive rule uses the same two-dimensional state and the same exact candidate-specific bandwidth selection as in the Fashion-MNIST experiment, applied to the 540 calibration slices. 
The state uses the horizontal prediction \(\widehat T_i^H\) and the 16-bin entropy
\(\widehat\Omega_i\) of the reconstruction at \(t_0\). The horizontal coordinate is
mapped to \([0,1]\) as \((\widehat T_i^{H}-t_0)/(t_{\max}-t_0)\), and the entropy
coordinate is already normalized to \([0,1]\).
Both squared bandwidths use the prespecified grid
\(\mathcal B=\{0.0125\times 2^m:m=-4,\ldots,4\}\).
For each test slice and candidate value \(t\), the rule reselects
\((h_T^2,h_\Omega^2)\in\mathcal B^2\) using the algorithm in
Appendix \ref{app:efficient_online_bandwidth}.

As in the Fashion-MNIST experiment, the entropy of the reconstructed images at the decision time \(t_0\) is strongly correlated with the true-image entropy across the M4Raw test slices, as shown in \Cref{fig:m4raw-final-correlation}. The decision-time entropy is an informative proxy for slice complexity, which supports its use as a state coordinate for the MRI data.

\begin{figure}[H]
\centering
\includegraphics[
    width=0.9\textwidth,
    trim={0 0 0 13cm},
    clip
]{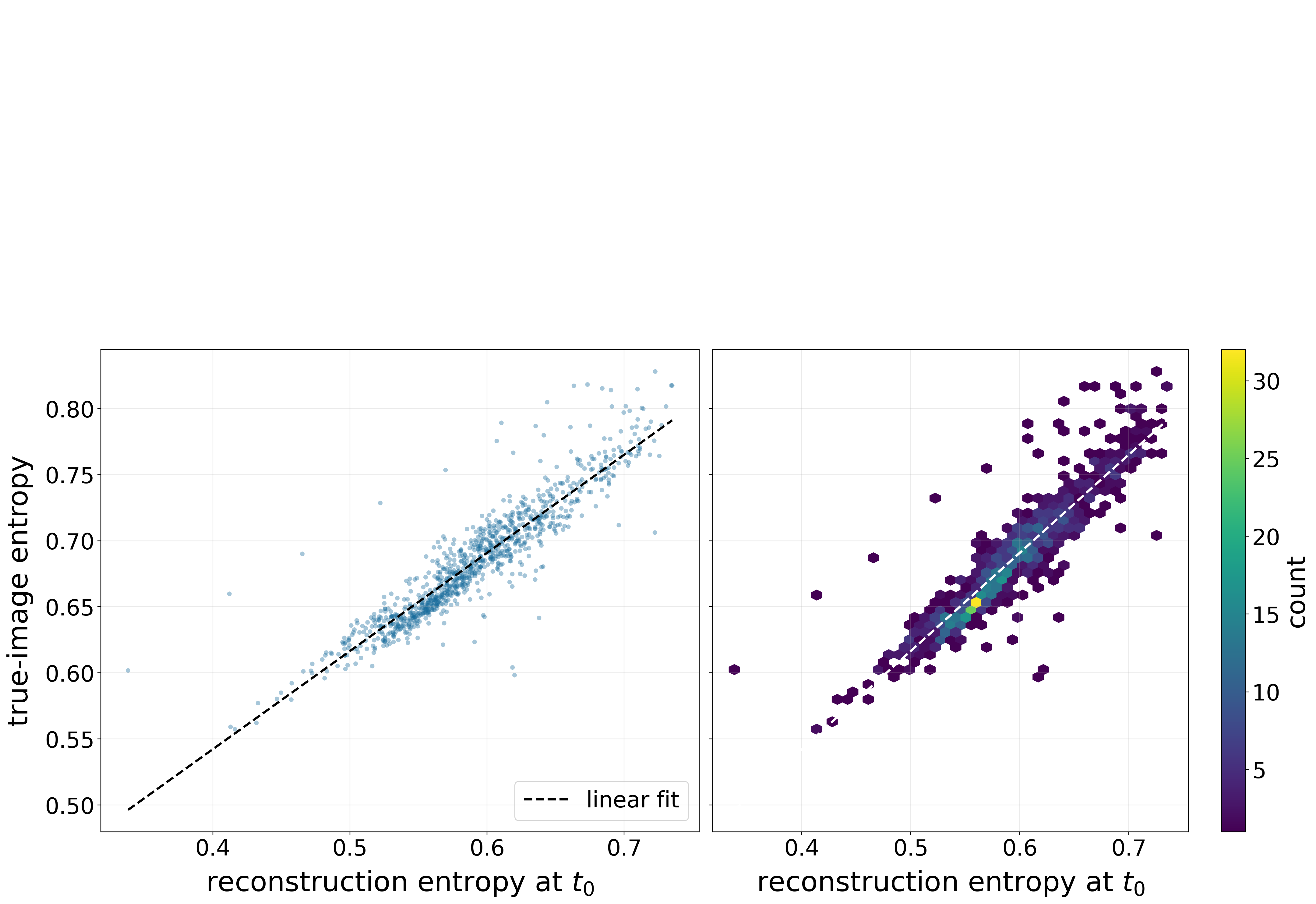}
\caption{Reconstruction entropy at \(t_0\) versus true-image entropy on the 1008 M4Raw test slices. Left: scatter plot with a linear fit. Right: hexagonal bin density.}
\label{fig:m4raw-final-correlation}
\end{figure}

\section{Additional results for the horizontal prediction}
\label{app:horizontal_extra}

This appendix collects two results that support the horizontal prediction model presented in
\Cref{subsec:horizontal}. Appendix \ref{app:ordered_tail} extends
\Cref{prop:next_band_residual_power_law_decay} to the case where the tail exponent varies
with acquisition time, and Appendix \ref{app:monotone_projection} shows how the monotonicity
constraint sharpens the extrapolation error bound.

\subsection{Local polynomial trend in the ordered-tail model}
\label{app:ordered_tail}

\Cref{prop:next_band_residual_power_law_decay} treats the stationary case, in which
\(p_i(\tau)\) does not depend on the acquisition time. The following proposition covers a
time-varying tail exponent.

\begin{proposition}
\label{prop:preprojection_residual_polynomial_trend}
Suppose \Cref{ass:ordered_tail_preprojection_residual} holds and the cutoffs are
logarithmically spaced as in \eqref{equ:omegat}. Fix an order \(q\ge1\). Assume that
\(p_i(\tau)\) is \(q+1\) times continuously differentiable in a neighborhood of
\(\tau=t_0-1\), with \(p_i(\tau)>1/2\) in this neighborhood. Then there exist
coefficients \(\gamma_{i,0}^{(q)},\ldots,\gamma_{i,q}^{(q)}\) such that, for \(t\) near
\(t_0\),
\[
\log I^\star_{i,t}
=
\sum_{\ell=0}^q
\gamma_{i,\ell}^{(q)}(t-t_0)^\ell
+
O_q\!\left(|t-t_0|^{q+1}\right).
\]
Moreover, for neighboring acquisition steps \(t,t+1\) near \(t_0\), if
\(p_i(t)\ge p_i(t-1)\), then
\[
I^\star_{i,t+1}\le I^\star_{i,t}.
\]
\end{proposition}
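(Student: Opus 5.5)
The plan is to compute \(I^\star_{i,t}\) in closed form under \Cref{ass:ordered_tail_preprojection_residual}, exactly as in the stationary case of \Cref{prop:next_band_residual_power_law_decay}, but with the exponent frozen at \(p:=p_i(t-1)\). Since \([\mathcal T E_i(t-1)](x)=A_i x^{-p}\), we have
\[
I^\star_{i,t}=K^{-1}A_i^2\int_{\omega_{t-1}}^{\omega_t}x^{-2p}\,dx .
\]
Substituting \(\omega_t=e^{\Delta(t-1)}\) and writing \(a=2p-1>0\) gives
\[
\log I^\star_{i,t}=F\bigl(p_i(t-1),t\bigr),
\qquad
F(p,t):=\log(K^{-1}A_i^2)+\log\frac{1-e^{-(2p-1)\Delta}}{2p-1}-(2p-1)\Delta(t-2).
\]
So \(\log I^\star_{i,t}=G(t)\), where \(G(\tau):=F(p_i(\tau-1),\tau)\) is now viewed as a function of a continuous variable \(\tau\).

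For the polynomial trend, note that \(a\mapsto\log\{(1-e^{-a\Delta})/a\}\) is real-analytic on \(a>0\); it extends smoothly through the removable point, although we only need \(a>0\). Hence \(F\) is \(C^\infty\) in \((p,t)\) on \(\{p>1/2\}\). Because \(p_i\) is \(C^{q+1}\) near \(t_0-1\) with \(p_i>1/2\) there, the chain rule makes \(G\) a \(C^{q+1}\) function near \(\tau=t_0\). Taylor's theorem with Lagrange remainder at \(t_0\) then gives
\[
\gamma^{(q)}_{i,\ell}=\frac{G^{(\ell)}(t_0)}{\ell!},
\]
with remainder bounded by \(\sup|G^{(q+1)}|\,|t-t_0|^{q+1}/(q+1)!\) on a compact neighborhood, which is the claimed \(O_q\) term. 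It is worth noting that when \(p_i\) is constant, \(G\) is affine and the result recovers \Cref{prop:next_band_residual_power_law_decay}.

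The main obstacle is the monotonicity claim, because \(I^\star_{i,t+1}\) and \(I^\star_{i,t}\) integrate over different bands with different exponents \(p'=p_i(t)\) and \(p=p_i(t-1)\). The first step is to change variables \(x=e^{\Delta}y\), which maps \((\omega_t,\omega_{t+1}]\) onto \((\omega_{t-1},\omega_t]\):
\[
I^\star_{i,t+1}=K^{-1}A_i^2\,e^{-(2p'-1)\Delta}\int_{\omega_{t-1}}^{\omega_t}y^{-2p'}\,dy .
\]
Then, since \(y\ge\omega_{t-1}\ge1\) and \(p'\ge p\), we have \(y^{-2p'}\le y^{-2p}\). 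Also \(e^{-(2p'-1)\Delta}<1\) because \(p'>1/2\). Combining the two bounds gives \(I^\star_{i,t+1}\le I^\star_{i,t}\), in fact strictly. The only care needed is to use \(x\ge1\), that is \(\omega_1=1\); this is exactly where the ordered coordinate convention enters. For fully rigorous bookkeeping I would double-check that the band index convention for \(I^\star\) (the residual at time \(t-1\) on band \(t\)) is applied consistently in both terms.
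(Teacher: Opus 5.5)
Your proposal is correct and follows the paper's proof essentially step for step. Like the paper, you evaluate \(I^\star_{i,t}\) in closed form with the exponent frozen at \(p_i(t-1)\), Taylor-expand the resulting \(C^{q+1}\) function of a continuous index around \(t_0\), and get monotonicity from the change of variables \(x=e^{\Delta}y\) together with \(y\ge 1\) and \(e^{-[2p_i(t)-1]\Delta}\le 1\). Your added remark that the inequality is in fact strict (since \(\Delta>0\), \(p_i(t)>1/2\), and the band integral is positive) is also valid.
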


\Cref{prop:preprojection_residual_polynomial_trend} shows that when \(p_i(\tau)\) varies
smoothly with acquisition time, the next-band residual \(\log I^\star_{i,t}\) admits a
local polynomial approximation near the decision time.

The second claim gives a condition for a decreasing trend of
\(I^\star_{i,t}\). If \(p_i(t)\ge p_i(t-1)\), then the weight \(x^{-p_i(\tau)}\) in
\eqref{equ:E_i} decays no more slowly after one additional acquisition step, i.e.,
the residual tail becomes no flatter from step \(t-1\) to step \(t\).

This condition need not hold at every acquisition step. Reconstruction artifacts,
texture, and image-specific fine-scale structure can create local violations. The point
is that, along an ordered coarse-to-fine acquisition path, this decay condition may hold
for most steps, so monotonicity captures the dominant trend of the residual history. The
monotone polynomial fit is therefore used to estimate this dominant decaying trend rather
than to enforce a property that holds exactly.

\subsection{Prediction gain from monotone projection}
\label{app:monotone_projection}

We next compare the unconstrained polynomial fit with the monotone fit at the same
image-wise selected order \(\hat q_i\) in \Cref{subsec:horizontal}. The result
shows that projecting onto the monotone class sharpens the extrapolation
bound on the future index set \(\mathcal R_0^{\mathrm c}\).

For a coefficient vector \(\gamma\in\Gamma_r\), define
\[
\|\gamma\|_{\mathcal R_0,r}^2
:=
\sum_{t\in\mathcal R_0}\{\phi_r(t)^\top\gamma\}^2,
\qquad
\|\gamma\|_{\mathcal R_0^{\mathrm c},r}^2
:=
\sum_{t\in\mathcal R_0^{\mathrm c}}\{\phi_r(t)^\top\gamma\}^2 .
\]
For the selected order \(\hat q_i\), define the extrapolation constant
\[
C_{\mathrm{ext}}(\hat q_i)
:=
\sup_{\gamma\in\Gamma_{\hat q_i}\setminus\{0\}}
\frac{
\|\gamma\|_{\mathcal R_0^{\mathrm c},\hat q_i}^2
}{
\|\gamma\|_{\mathcal R_0,\hat q_i}^2
}.
\]
This ratio is finite because \(\Gamma_{\hat q_i}\) is finite-dimensional and the design on
\(\mathcal R_0\) has full column rank.

\begin{proposition}
\label{prop:monotone_projection_gain}
Let \(\hat q_i\in\mathcal Q\) be the selected order for image \(i\). Suppose the true
log-residual trend has order \(q_i^\star\le \hat q_i\), meaning that
\[
\mu_i^\star(t)=\phi_{\hat q_i}(t)^\top\gamma_i^\star
\]
for some \(\gamma_i^\star\in\Gamma_{\hat q_i}^{\mathrm{mon}}\), with lower-order
coefficients padded by zeros if needed. Let \(\hat\gamma_{i,\hat q_i}\) be the
unconstrained least-squares fit over \(\Gamma_{\hat q_i}\). Then
\[
\|\hat\gamma_{i,\hat q_i}-\gamma_i^\star\|_{\mathcal R_0^{\mathrm c},\hat q_i}^2
\le
C_{\mathrm{ext}}(\hat q_i)
\|\hat\gamma_{i,\hat q_i}-\gamma_i^\star\|_{\mathcal R_0,\hat q_i}^2 .
\]
Let \(\hat\gamma_{i,\hat q_i}^{\mathrm{mon}}\) be the constrained least-squares
fit over \(\Gamma_{\hat q_i}^{\mathrm{mon}}\). Then
\[
\|\hat\gamma_{i,\hat q_i}^{\mathrm{mon}}-\gamma_i^\star\|_{\mathcal R_0^{\mathrm c},\hat q_i}^2
\le
C_{\mathrm{ext}}(\hat q_i)
\left\{
\|\hat\gamma_{i,\hat q_i}-\gamma_i^\star\|_{\mathcal R_0,\hat q_i}^2
-
\|\hat\gamma_{i,\hat q_i}
-
\hat\gamma_{i,\hat q_i}^{\mathrm{mon}}\|_{\mathcal R_0,\hat q_i}^2
\right\}.
\]
\end{proposition}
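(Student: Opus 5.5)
The plan is to work in the Hilbert space of coefficient vectors \(\Gamma_{\hat q_i}=\mathbb R^{\hat q_i+1}\), equipped with the seminorm \(\|\cdot\|_{\mathcal R_0,\hat q_i}\). Because the design \(\Phi\) with rows \(\phi_{\hat q_i}(t)^\top\), \(t\in\mathcal R_0\), has full column rank, this seminorm is a genuine norm, induced by the inner product \(\langle a,b\rangle_{\mathcal R_0}=a^\top\Phi^\top\Phi b\).

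The first inequality is then immediate. Apply the definition of \(C_{\mathrm{ext}}(\hat q_i)\) to \(\gamma=\hat\gamma_{i,\hat q_i}-\gamma_i^\star\); if \(\gamma=0\), both sides vanish. The same argument with \(\gamma=\hat\gamma^{\mathrm{mon}}_{i,\hat q_i}-\gamma_i^\star\) gives
\[
\|\hat\gamma^{\mathrm{mon}}_{i,\hat q_i}-\gamma_i^\star\|_{\mathcal R_0^{\mathrm c},\hat q_i}^2
\le C_{\mathrm{ext}}(\hat q_i)\,
\|\hat\gamma^{\mathrm{mon}}_{i,\hat q_i}-\gamma_i^\star\|_{\mathcal R_0,\hat q_i}^2 .
\]
It therefore suffices to prove the in-sample Pythagorean-type inequality
\[
\|\hat\gamma^{\mathrm{mon}}-\gamma^\star\|_{\mathcal R_0}^2
\le \|\hat\gamma-\gamma^\star\|_{\mathcal R_0}^2-\|\hat\gamma-\hat\gamma^{\mathrm{mon}}\|_{\mathcal R_0}^2 .
\]

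The key structural step is to identify the constrained fit as a projection of the unconstrained one. Write \(H=(H_{i,t})_{t\in\mathcal R_0}\). The unconstrained least-squares fit satisfies the normal equations \(\Phi^\top(H-\Phi\hat\gamma)=0\), which yields for every \(\gamma\) the decomposition
\[
\|H-\Phi\gamma\|^2=\|H-\Phi\hat\gamma\|^2+\|\gamma-\hat\gamma\|_{\mathcal R_0}^2 .
\]
Minimizing the left side over \(\Gamma^{\mathrm{mon}}_{\hat q_i}\) is therefore the same as minimizing \(\|\gamma-\hat\gamma\|_{\mathcal R_0}\) over that set. Hence \(\hat\gamma^{\mathrm{mon}}\) is the metric projection of \(\hat\gamma\) onto \(\Gamma^{\mathrm{mon}}_{\hat q_i}\) in the \(\langle\cdot,\cdot\rangle_{\mathcal R_0}\) geometry. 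The monotone class is defined by finitely many homogeneous linear inequalities, so it is a closed convex cone. The projection is therefore unique, with no dependence on which minimizer the solver returns. It is also characterized by the variational inequality
\[
\langle \hat\gamma-\hat\gamma^{\mathrm{mon}},\,\gamma-\hat\gamma^{\mathrm{mon}}\rangle_{\mathcal R_0}\le 0
\qquad\text{for all }\gamma\in\Gamma^{\mathrm{mon}}_{\hat q_i}.
\]

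Now take \(\gamma=\gamma^\star\), which is allowed because the hypothesis places \(\gamma_i^\star\) in the monotone class. Expand
\[
\|\hat\gamma-\gamma^\star\|^2=\|\hat\gamma-\hat\gamma^{\mathrm{mon}}\|^2+\|\hat\gamma^{\mathrm{mon}}-\gamma^\star\|^2+2\langle \hat\gamma-\hat\gamma^{\mathrm{mon}},\hat\gamma^{\mathrm{mon}}-\gamma^\star\rangle .
\]
By the variational inequality the cross term is nonnegative, which gives exactly the displayed in-sample bound. Combining it with the extrapolation-constant inequality completes the second claim.

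The only delicate point is the projection identification in the third paragraph: verifying the normal-equation decomposition, so that the data-space least squares becomes a coefficient-space projection, and checking that full column rank makes the inner product positive definite so the projection is well defined. The rest is routine convex geometry. Realizability of \(\gamma^\star\) in the monotone class is used solely to justify plugging it into the variational inequality.
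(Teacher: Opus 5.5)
Your proposal is correct and follows the paper's proof step for step. The normal-equation decomposition identifies \(\hat\gamma^{\mathrm{mon}}_{i,\hat q_i}\) as the \(\|\cdot\|_{\mathcal R_0,\hat q_i}\)-projection of \(\hat\gamma_{i,\hat q_i}\) onto the closed convex class \(\Gamma^{\mathrm{mon}}_{\hat q_i}\), the Pythagorean inequality is applied with \(\gamma_i^\star\) in that class, and \(C_{\mathrm{ext}}(\hat q_i)\) converts the in-sample bound to the extrapolation bound; the only difference is that you derive the Pythagorean inequality explicitly from the variational inequality, whereas the paper simply cites it.
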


The second bound follows because \(\Gamma_{\hat q_i}^{\mathrm{mon}}\) is convex, so
\(\hat\gamma_{i,\hat q_i}^{\mathrm{mon}}\) is the \(\|\cdot\|_{\mathcal R_0,\hat q_i}\)
projection of the unconstrained fit onto the monotone class, and \(\gamma_i^\star\) lies in
that class. The two displays compare deterministic upper bounds rather than realized
prediction errors; the proposition does not by itself imply that the constrained
fit has smaller realized extrapolation error in every dataset.

\Cref{prop:monotone_projection_gain} assumes \(q_i^\star\le\hat q_i\). This means
that the selected image-wise polynomial class is rich enough to contain the true dominant
trend. If \(\hat q_i<q_i^\star\), the fit has approximation bias; an additional bias
term would be needed, and the projection argument alone does not guarantee accurate
extrapolation.

When \(\hat q_i\ge q_i^\star\), the true trend can be represented by the selected
class, but the unconstrained fit can still be unstable. A larger selected order can
absorb local fluctuations in the short pre-decision history. For example, fix an order \(r\) and suppose
\[
H_{i,t}
=
\phi_r(t)^\top\gamma_i^\star+\varepsilon_{i,t},
\qquad
t\in\mathcal R_0,
\]
where the errors are independent and homoskedastic with variance
\(\sigma^2\). Then
\[
\mathbb E\left[
\|\hat\gamma_{i,r}-\gamma_i^\star\|_{\mathcal R_0,r}^2
\right]
=
\sigma^2(r+1).
\]
Thus, before accounting for data-dependent order selection, increasing the
candidate order increases the amount of noise that an unconstrained
least-squares fit can absorb. A larger order can reduce approximation bias,
but it can also increase variance.

The monotone fit addresses this overfitting problem. The term
\(
\|\hat\gamma_{i,\hat q_i}
-
\hat\gamma_{i,\hat q_i}^{\mathrm{mon}}\|_{\mathcal R_0,\hat q_i}^2
\)
measures the part of the unconstrained fit that is removed by projecting onto the
monotone class. More specifically, this term captures local upward movements that are
inconsistent with the dominant decaying trend. The proposition shows that removing this
component sharpens the extrapolation bound.

In summary, the image-wise order \(\hat q_i\) controls the bias--variance trade-off, while
the monotonicity constraint stabilizes the selected-order fit. If \(\hat q_i\) is too
small, the method may underfit the true trend. If \(\hat q_i\) is large enough, the monotone
projection can improve extrapolation by removing nonmonotone fluctuations.

\section{Bandwidth selection}
\label{app:bandwidth_selection}

Recall that 
\(
S_i
=
\left(
\hat T_i^H,\hat\Omega_i
\right)^\top,
\)
where \(\hat T_i^H\) is the horizontal stopping-time prediction and
\(\hat\Omega_i\) is the decision-time reconstruction entropy.
For bandwidths \(h_T>0\) and \(h_\Omega>0\), 
\begin{equation}
\label{eq:vertical_bandwidth_distance}
\ell_{jk}(h_T,h_\Omega)
=
\frac{
(\hat T_j^H-\hat T_k^H)^2
}{
h_T^2
}
+
\frac{
(\hat\Omega_j-\hat\Omega_k)^2
}{
h_\Omega^2
}.
\end{equation}
A smaller \(h_T\) requires tighter matching in the horizontal prediction, while a
smaller \(h_\Omega\) requires tighter matching in reconstruction entropy. Conversely,
larger bandwidths produce more diffuse weights and a more global vertical correction.

For an index set \(\mathcal I\subseteq[n+1]\), define
\begin{equation}
\label{eq:vertical_weights_index_set}
\hat\omega_{j,k}
(h_T,h_\Omega;\mathcal I)
=
\frac{
\exp\{-\ell_{jk}(h_T,h_\Omega)\}
}{
\displaystyle
\sum_{l\in\mathcal I\setminus\{j\}}
\exp\{-\ell_{jl}(h_T,h_\Omega)\}
},
\qquad
k\in\mathcal I\setminus\{j\}.
\end{equation}

Let \(\mathcal H_T\) and \(\mathcal H_\Omega\) be fixed finite grids of candidate
bandwidths. In experiments, the squared bandwidths are chosen from
\[
\mathcal B
=
\left\{
0.0125\times2^m:m=-4,\ldots,4
\right\},
\qquad
\mathcal H_T
=
\mathcal H_\Omega
=
\left\{
\sqrt b:b\in\mathcal B
\right\}.
\]

\subsection{Exact candidate-specific bandwidth selection}\label{sect:exactband}

For a candidate stopping time
\(t\in\{t_0,\ldots,t_{\max}\}\), recall the augmented floored labels
\[
T_k^\star(t)
=
\begin{cases}
T_k^\star, & k=1,\ldots,n,\\
t, & k=n+1.
\end{cases}
\]
For a fixed bandwidth pair \((h_T,h_\Omega)\), define the augmented horizontal
residuals
\[
e_k(t)
:=
T_k^\star(t)-\hat T_k^H,
\qquad
k=1,\ldots,n+1.
\]
The horizontal--vertical prediction for augmented point \(j\) is
\begin{equation}
\label{eq:augmented_hv_prediction_bandwidth}
\hat T_j^{HV}(t;h_T,h_\Omega)
=
\hat T_j^H
+
\sum_{k\in[n+1]\setminus\{j\}}
\hat\omega_{j,k}
(h_T,h_\Omega;[n+1])
e_k(t).
\end{equation}
Its corresponding residual is
\[
R_j(t;h_T,h_\Omega)
=
T_j^\star(t)-\hat T_j^{HV}(t;h_T,h_\Omega).
\]
We select the bandwidth pair for every candidate
\(t\):
\begin{equation}
\label{eq:augmented_bandwidth_selection}
\bigl(
\hat h_T(t),\hat h_\Omega(t)
\bigr)
\in
\argmin_{
h_T\in\mathcal H_T,
h_\Omega\in\mathcal H_\Omega
}
Q_{h_T,h_\Omega}(t),
\end{equation}
where $Q_{h_T,h_\Omega}(t)$ is the squared loss for predicting the stopping time:
\begin{equation}
    \label{eq:augmented_bandwidth_objective}
Q_{h_T,h_\Omega}(t)
:=
\frac{1}{n+1}
\sum_{j=1}^{n+1}
R_j(t;h_T,h_\Omega)^2.
\end{equation}
The resulting vertical weights are defined as in \eqref{eq:vertical_weights_index_set}:
\[
\hat\omega_{j,k}(t)
=
\hat\omega_{j,k}
\left(
\hat h_T(t),\hat h_\Omega(t);[n+1]
\right).
\]
If more than one bandwidth pair minimizes
\eqref{eq:augmented_bandwidth_selection}, we select the pair with the smallest
index under a prespecified ordering of
\(\mathcal H_T\times\mathcal H_\Omega\). For example, we order pairs first by
increasing \(h_T\) and then by increasing \(h_\Omega\). This deterministic
tie-breaking rule does not depend on the observation labels.
Because the criterion in \eqref{eq:augmented_bandwidth_objective} treats all
\(n+1\) augmented observations symmetrically, this bandwidth-selection rule is
permutation invariant. Consequently, the resulting residuals are
exchangeable under the assumptions of \Cref{thm:hv_marginal_validity}, and the marginal validity
continues to hold.

\subsection{Efficient online implementation}
\label{app:efficient_online_bandwidth}

Although \eqref{eq:augmented_bandwidth_selection} appears to require recomputing all
leave-one-out predictions for every candidate \(t\), its candidate dependence can be
updated exactly at low cost.

Denote the finite grid of bandwidth pairs by
\(
\mathcal G
:=
\mathcal H_T\times\mathcal H_\Omega.
\)
For conciseness, write
\[
g=(h_T,h_\Omega)\in\mathcal G,
\qquad
\ell_{jk}(g):=\ell_{jk}(h_T,h_\Omega),
\qquad
K_{jk}(g):=\exp\{-\ell_{jk}(g)\}.
\]
\paragraph{Calibration-only step.}
For the calibration points, define
\[
e_k^\star
:=
T_k^\star-\hat T_k^H,
\qquad
k\in[n].
\]
For each bandwidth pair \(g\in\mathcal G\) and calibration point \(j\in[n]\),
precompute
\begin{align}
D_j^{(n)}(g)
&:=
\sum_{k\in[n]\setminus\{j\}}
K_{jk}(g),
\label{eq:online_calibration_denominator}\\
N_j^{(n)}(g)
&:=
\sum_{k\in[n]\setminus\{j\}}
K_{jk}(g)e_k^\star,
\label{eq:online_calibration_numerator}
\end{align}
and the calibration-only local residual average
\begin{equation}
\label{eq:online_calibration_residual_average}
\bar e_j^{(n)}(g)
:=
\frac{N_j^{(n)}(g)}{D_j^{(n)}(g)}= 
\sum_{k\in[n]\setminus\{j\}}
\hat\omega_{j,k}(g;[n])e_k^\star, \qquad \hat\omega_{j,k}(g;[n])
=
\frac{K_{jk}(g)}{D_j^{(n)}(g)}.
\end{equation}

\paragraph{Online state update.}
Given  a  state \(S_{n+1}\), compute
\(K_{j,n+1}(g)\) for \(j\in[n]\). Define
\begin{equation}
\label{eq:test_kernel_weight_pi}
\pi_j(g)
:=
\frac{
K_{j,n+1}(g)
}{
D_j^{(n)}(g)+K_{j,n+1}(g)
}.
\end{equation}
For each \(j\in[n]\), the augmented weights satisfy
\begin{equation}
\label{eq:online_weight_update}
\hat\omega_{j,k}(g;[n+1])
=
\begin{cases}
\{1-\pi_j(g)\}\hat\omega_{j,k}(g;[n]),
&
k\in[n]\setminus\{j\},
\\[3pt]
\pi_j(g),
&
k=n+1.
\end{cases}
\end{equation}
Thus, adding the test point multiplies the \(n-1\) calibration-only weights in row
\(j\) by the common factor \(1-\pi_j(g)\). The test point receives weight
\(\pi_j(g)\), and the updated weights $\hat\omega_{j,k}(g;[n+1])$, $k\in[n+1]\setminus\{j\}$, continue to sum to one.

\paragraph{Online stopping-time update.}

For a candidate
\(t\in\{t_0,\ldots,t_{\max}\}\), define the augmented horizontal residual of the
test point by
\begin{equation}
\label{eq:test_horizontal_residual_candidate}
e_{n+1}(t)
:=
T_{n+1}^\star(t)-\hat T_{n+1}^H
=
t-\hat T_{n+1}^H,
\end{equation}
by the definition of the augmented labels.

For each calibration point \(j\in[n]\), its augmented local residual
average is
\begin{align}
\bar e_j(t;g)
&:=
\sum_{k\in[n]\setminus\{j\}}
\hat\omega_{j,k}(g;[n+1])e_k^\star
+
\hat\omega_{j,n+1}(g;[n+1])e_{n+1}(t)
\nonumber\\
&=
\frac{
N_j^{(n)}(g)
+
K_{j,n+1}(g)e_{n+1}(t)
}{
D_j^{(n)}(g)
+
K_{j,n+1}(g)
}
\nonumber\\
&=
\{1-\pi_j(g)\}\bar e_j^{(n)}(g)
+
\pi_j(g)e_{n+1}(t).
\label{eq:online_local_residual_average}
\end{align}
The dependence on the candidate
\(t\) enters only through \(e_{n+1}(t)\).

For the test point, the leave-one-out local residual average is
\begin{align}
\bar e_{n+1}(g)
&:=
\sum_{k=1}^n
\hat\omega_{n+1,k}(g;[n+1])e_k^\star
=
\frac{
\displaystyle
\sum_{k=1}^n
K_{n+1,k}(g)e_k^\star
}{
\displaystyle
\sum_{k=1}^n
K_{n+1,k}(g)
}.
\label{eq:test_local_residual_average}
\end{align}
Unlike \(\bar e_j(t;g)\) for \(j\in[n]\), the quantity
\(\bar e_{n+1}(g)\) does not depend on \(t\), because observation \(n+1\) is
excluded from its own leave-one-out correction.

Therefore, the residual scores used to compute the conformal \(p\)-value are
\begin{equation}
\label{eq:online_residual_scores}
R_j(t;g)
=
\begin{cases}
e_j^\star-\bar e_j(t;g),
&
j\in[n],
\\[5pt]
e_{n+1}(t)-\bar e_{n+1}(g),
&
j=n+1.
\end{cases}
\end{equation}

\paragraph{Quadratic objective.}

Substituting \eqref{eq:online_residual_scores} into
\eqref{eq:augmented_bandwidth_objective} gives
\begin{align}
Q_g(t)
&:=
Q_{h_T,h_\Omega}(t)
\nonumber\\
&=
\frac{1}{n+1}
\sum_{j=1}^n
\left[
e_j^\star
-
\{1-\pi_j(g)\}\bar e_j^{(n)}(g)
-
\pi_j(g)e_{n+1}(t)
\right]^2
\nonumber\\
&\quad
+
\frac{1}{n+1}
\left\{
e_{n+1}(t)-\bar e_{n+1}(g)
\right\}^2
\nonumber\\
&=
A_g e_{n+1}(t)^2
-
2B_g e_{n+1}(t)
+
C_g,
\label{eq:quadratic_bandwidth_objective}
\end{align}
where
\begin{equation}
\label{eq:quadratic_bandwidth_coefficients}
\begin{split}
A_g
&=
\frac{
1+\sum_{j=1}^n\pi_j(g)^2
}{
n+1
},
\\[4pt]
B_g
&=
\frac{1}{n+1}
\left[
\bar e_{n+1}(g)
+
\sum_{j=1}^n
\pi_j(g)
\left\{
e_j^\star
-
\{1-\pi_j(g)\}\bar e_j^{(n)}(g)
\right\}
\right],
\\[4pt]
C_g
&=
\frac{1}{n+1}
\left[
\bar e_{n+1}(g)^2
+
\sum_{j=1}^n
\left\{
e_j^\star
-
\{1-\pi_j(g)\}\bar e_j^{(n)}(g)
\right\}^2
\right].
\end{split}
\end{equation}

The coefficients \(A_g\), \(B_g\), and \(C_g\) are computed once for each
bandwidth pair after the test state is observed. For candidate \(t\), the exact
candidate-specific bandwidth pair is
\[
\hat g(t)
=
\bigl(
\hat h_T(t),\hat h_\Omega(t)
\bigr)
\in
\arg\min_{g\in\mathcal G}
Q_g(t).
\]
If the minimum is attained by multiple bandwidth pairs, ties are resolved using a
deterministic ordering of \(\mathcal G\) fixed in advance and independent of the
observation labels.

\begin{algorithm}[t]
\caption{Bandwidth selection for full conformal prediction}
\label{alg:online_full_conformal_bandwidth}
\small
\begin{algorithmic}[1]
\algrenewcommand\algorithmicrequire{\textbf{Input:}}
\algrenewcommand\algorithmicensure{\textbf{Output:}}

\Require Bandwidth grid \(\mathcal G\), calibration data
\(\{S_j,T_j^\star,\hat T_j^H\}_{j=1}^n\), test data
\((S_{n+1},\hat T_{n+1}^H)\), and miscoverage level \(\alpha\).

\Ensure Conformalized stopping time \(\hat U_{n+1}^{HV}\).

\State Compute \(e_j^\star\gets T_j^\star-\hat T_j^H\) for all \(j\in[n]\).
\State \(e\gets t_0-\hat T_{n+1}^H\).
\State \(\hat U_{n+1}^{HV}\gets t_{\max}\).

\ForAll{\(g\in\mathcal G\)}
    \Comment{\textbf{Calibration-only step}}
    \State Compute
    \(\{D_j^{(n)}(g),N_j^{(n)}(g),\bar e_j^{(n)}(g)\}_{j=1}^n\)
    using \eqref{eq:online_calibration_denominator}--%
    \eqref{eq:online_calibration_residual_average}.
\EndFor

\ForAll{\(g\in\mathcal G\)}
    \Comment{\textbf{Online state update}}
    \State Compute
    \(\{\pi_j(g)\}_{j=1}^n\) and \(\bar e_{n+1}(g)\)
    using \eqref{eq:test_kernel_weight_pi} and
    \eqref{eq:test_local_residual_average}.
    \State Compute \(A_g,B_g,C_g\) using
    \eqref{eq:quadratic_bandwidth_coefficients}, and set
    \(Q_g\gets A_g e^2-2B_g e+C_g\).
\EndFor

\For{\(t=t_0,\ldots,t_{\max}\)}
    \Comment{\textbf{Online stopping-time update}}
    \State
    \(\displaystyle
    \hat g(t)\gets
    \argmin_{g\in\mathcal G}Q_g
    \).
    \State Compute the $p$-value \(p^{HV}(t)\) using
    \(\{R_j(t;\hat g(t))\}_{j=1}^{n+1}\) from
     \eqref{eq:online_residual_scores} .

    \If{\(p^{HV}(t)>\alpha\)}
        \State \(\hat U_{n+1}^{HV}\gets t\).
    \EndIf

    \If{\(t<t_{\max}\)}
        \ForAll{\(g\in\mathcal G\)}
            \State \(Q_g\gets Q_g+A_g(2e+1)-2B_g\).
        \EndFor
        \State \(e\gets e+1\).
    \EndIf
\EndFor

\State \Return \(\hat U_{n+1}^{HV}\).

\end{algorithmic}
\end{algorithm}

For consecutive integer candidates,
\(
e_{n+1}(t+1)
=
e_{n+1}(t)+1.
\)
Therefore,
\begin{equation}
\label{eq:recursive_bandwidth_objective}
Q_g(t+1)
=
Q_g(t)
+
A_g\{2e_{n+1}(t)+1\}
-
2B_g.
\end{equation}
Thus, after \(Q_g(t_0)\) is initialized, each subsequent objective value can be
computed in constant time for a fixed bandwidth pair.

\paragraph{Computational complexity.}
Let
\(
G
=
|\mathcal G|
=
|\mathcal H_T|\,|\mathcal H_\Omega|
\)
be the number of bandwidth pairs, and let
\(
N_{\mathrm{cand}}
=
t_{\max}-t_0+1
\)
be the number of candidate stopping times. The calibration-only precomputation,
including the pairwise kernel values and their sums, costs
\(O(Gn^2)\). This step is performed once and reused for all test images.

For each new test image, computing its kernel similarities with the \(n\) calibration
states and constructing the quadratic coefficients costs \(O(Gn)\). Updating the
bandwidth-selection objectives and minimizing over the \(G\) bandwidth pairs for all
\(t\in\{t_0,\ldots,t_{\max}\}\) costs \(O(GN_{\mathrm{cand}})\).
Computing the \(n+1\) residual scores and the conformal \(p\)-value for every
candidate costs \(O(nN_{\mathrm{cand}})\). Therefore, after the one-time
calibration precomputation, the cost of computing the upper bound for each new test
image is
\[
O\left(
Gn+GN_{\mathrm{cand}}+nN_{\mathrm{cand}}
\right).
\]
For fixed bandwidth grids and a fixed candidate range, the per-test-image cost is
linear in the calibration sample size \(n\). In our experiments, we used \(G=81\), while \(N_{\mathrm{cand}}\) is on the order of tens.

\subsection{Calibration-only approximation}
\label{app:calibration_only_approximation}

A simpler implementation selects the bandwidth pair once using only the \(n\)
calibration observations. For a fixed bandwidth pair
\(g=(h_T,h_\Omega)\in\mathcal G\), define the calibration-only leave-one-out
horizontal--vertical prediction
\[
\hat T_j^{HV,\mathrm{cal}}(g)
=
\hat T_j^H
+
\sum_{k\in[n]\setminus\{j\}}
\hat\omega_{j,k}(g;[n])e_k^\star
=
\hat T_j^H+\bar e_j^{(n)}(g),
\qquad
j\in[n],
\]
where
\(
e_k^\star=T_k^\star-\hat T_k^H
\)
and \(\bar e_j^{(n)}(g)\) is defined in
\eqref{eq:online_calibration_residual_average}. The calibration-only
objective is
\[
Q_g^{\mathrm{cal}}
:=
\frac1n
\sum_{j=1}^n
\left\{
T_j^\star-\hat T_j^{HV,\mathrm{cal}}(g)
\right\}^2
=
\frac1n
\sum_{j=1}^n
\left\{
e_j^\star-\bar e_j^{(n)}(g)
\right\}^2.
\]

Using the same prespecified ordering of \(\mathcal G\) as in the exact
procedure, we select $\tilde g
=
(\tilde h_T,\tilde h_\Omega)
\in
\arg\min_{g\in\mathcal G}
Q_g^{\mathrm{cal}}.$
The selected bandwidth pair \(\tilde g\) is then held fixed for every candidate value
\(t\in\{t_0,\ldots,t_{\max}\}\). The weights used in the residual scores are
computed over all \(n+1\) states:
\[
\tilde\omega_{j,k}
:=
\hat\omega_{j,k}(\tilde g;[n+1]),
\qquad
k\in[n+1]\setminus\{j\}.
\]
Thus, the test state participates in the final local correction, but its hypothetical
stopping-time label does not participate in bandwidth selection. Let
\(\tilde U_{n+1}^{HV}\) denote the resulting stopping rule.

This procedure does not alter the exchangeability of the observations themselves.
However, its score-construction process is not permutation invariant, because the first
\(n\) stopping-time labels are used to select \(\tilde g\), whereas the hypothetical
value $t$ of the stopping time $T_{n+1}$ is not. Consequently, the exact finite-sample rank
argument in \Cref{thm:hv_marginal_validity} does not directly apply to the
calibration-only approximation.

\paragraph{One-point stability.}
The difference between the calibration-only and exact augmented procedures can be
quantified directly from the kernel weights. For a fixed bandwidth pair
\(g\in\mathcal G\), define the calibration-only residual score
\[
R_j^{\mathrm{cal}}(g)
:=
e_j^\star-\bar e_j^{(n)}(g),
\qquad
j\in[n].
\]
Using \eqref{eq:online_local_residual_average}, the augmented residual score
$R_j(t;g)$ for  $j\in[n]$
satisfies
\begin{align}
R_j(t;g)
=
e_j^\star-\bar e_j(t;g)
=
R_j^{\mathrm{cal}}(g)
+
\pi_j(g)
\left\{
\bar e_j^{(n)}(g)-e_{n+1}(t)
\right\}.
\label{eq:augmented_calibration_residual_difference}
\end{align}
Thus, \(\pi_j(g)\) is the exact influence of the augmented test point on $R_j(t;g)$.

Denote the calibration-only and candidate-specific selected bandwidth pairs by
\[
\tilde g
\in
\arg\min_{g\in\mathcal G}
Q_g^{\mathrm{cal}},
\qquad
\hat g(t)
\in
\arg\min_{g\in\mathcal G}
Q_g(t).
\]
When \(\tilde g\) is the unique calibration-only minimizer, define the calibration
margin
\[
\Delta_n^{\mathrm{cal}}
:=
\min_{g\in\mathcal G\setminus\{\tilde g\}}
\left\{
Q_g^{\mathrm{cal}}
-
Q_{\tilde g}^{\mathrm{cal}}
\right\}.
\]

Define the maximal average influence of the augmented test point by
\[
\bar\pi_n
:=
\max_{g\in\mathcal G}
\frac{1}{n}
\sum_{j=1}^n
\pi_j(g),
\]
where \(\pi_j(g)\), defined in
\eqref{eq:test_kernel_weight_pi}, is the weight assigned to the augmented test point
in the neighborhood of calibration point \(j\). Define the objective-stability
parameter
\[
\varepsilon_n^{\mathrm{stab}}
:=
\frac{
8(t_{\max}-t_0)^2
\left(
n\bar\pi_n+1
\right)
}{
n+1
}.
\]

\begin{proposition}[Stability]
\label{prop:calibration_only_stability}
The augmented and calibration-only objectives satisfy
\begin{equation}
\label{eq:calibration_augmented_objective_difference}
\max_{g\in\mathcal G}
\max_{t\in\{t_0,\ldots,t_{\max}\}}
\left|
Q_g(t)-Q_g^{\mathrm{cal}}
\right|
\le
\varepsilon_n^{\mathrm{stab}}.
\end{equation}
Moreover, for every candidate \(t\),
\[
0
\le
Q_{\hat g(t)}^{\mathrm{cal}}
-
Q_{\tilde g}^{\mathrm{cal}}
\le
2\varepsilon_n^{\mathrm{stab}}.
\]
If \(\tilde g\) is the unique calibration-only minimizer and
\(
\Delta_n^{\mathrm{cal}}
>
2\varepsilon_n^{\mathrm{stab}},
\)
then
\[
\hat g(t)=\tilde g
\qquad
\text{for every }
t\in\{t_0,\ldots,t_{\max}\}.
\]
\end{proposition}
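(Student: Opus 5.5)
The bound on the objective difference comes from writing $Q_g(t)$ and $Q_g^{\mathrm{cal}}$ on a common footing; the two remaining claims then follow from a standard argmin-perturbation argument.

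\emph{Boundedness.} All labels $T_k^\star$, $t$ and all horizontal predictions $\hat T_k^H$ lie in $[t_0,t_{\max}]$. Hence every horizontal residual $e_k^\star$ and $e_{n+1}(t)$ lies in $[-(t_{\max}-t_0),\,t_{\max}-t_0]$. Write $\Delta_T:=t_{\max}-t_0$.

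Local averages are convex combinations of such residuals, so $\bar e_j^{(n)}(g)$, $\bar e_j(t;g)$ and $\bar e_{n+1}(g)$ are also bounded by $\Delta_T$ in absolute value. Consequently every residual score satisfies $|R_j(t;g)|\le 2\Delta_T$ and $|R_j^{\mathrm{cal}}(g)|\le 2\Delta_T$.

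\emph{The objective bound.} Decompose
\[
Q_g(t)-Q_g^{\mathrm{cal}}
=\frac{1}{n+1}\sum_{j=1}^n\bigl\{R_j(t;g)^2-R_j^{\mathrm{cal}}(g)^2\bigr\}
+\frac{1}{n+1}R_{n+1}(t;g)^2
+\Bigl(\frac1{n+1}-\frac1n\Bigr)\sum_{j=1}^n R_j^{\mathrm{cal}}(g)^2 .
\]

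\begin{itemize}
\item For the first sum, factor $a^2-b^2=(a-b)(a+b)$ and use the exact influence identity \eqref{eq:augmented_calibration_residual_difference}. This gives $|R_j(t;g)-R_j^{\mathrm{cal}}(g)|=\pi_j(g)\,|\bar e_j^{(n)}(g)-e_{n+1}(t)|\le 2\Delta_T\pi_j(g)$, together with $|a+b|\le 4\Delta_T$. The term is therefore at most $8\Delta_T^2\sum_j\pi_j(g)/(n+1)\le 8\Delta_T^2\, n\bar\pi_n/(n+1)$.
\item The test term is at most $4\Delta_T^2/(n+1)$.
\item The last term equals $-\frac{1}{n+1}\cdot\frac1n\sum_j R_j^{\mathrm{cal}}(g)^2$, which has magnitude at most $4\Delta_T^2/(n+1)$.
\end{itemize}

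The second and third terms have opposite signs (both are nonnegative up to sign), so together they contribute at most $4\Delta_T^2/(n+1)\le 8\Delta_T^2/(n+1)$ in absolute value. Summing the pieces gives $|Q_g(t)-Q_g^{\mathrm{cal}}|\le 8\Delta_T^2(n\bar\pi_n+1)/(n+1)=\varepsilon_n^{\mathrm{stab}}$, uniformly in $g$ and $t$. The constant $8$ leaves slack, so even crude bounds on each piece suffice.

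\emph{Near-optimality of $\hat g(t)$.} The lower bound $0$ holds because $\tilde g$ minimizes $Q^{\mathrm{cal}}$. For the upper bound, chain
\[
Q^{\mathrm{cal}}_{\hat g(t)}\le Q_{\hat g(t)}(t)+\varepsilon\le Q_{\tilde g}(t)+\varepsilon\le Q^{\mathrm{cal}}_{\tilde g}+2\varepsilon ,
\]
where the middle step uses that $\hat g(t)$ minimizes $Q(t)$.

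\emph{Identification of the two selections.} Suppose $\hat g(t)\neq\tilde g$. Then $Q^{\mathrm{cal}}_{\hat g(t)}-Q^{\mathrm{cal}}_{\tilde g}\ge\Delta_n^{\mathrm{cal}}>2\varepsilon_n^{\mathrm{stab}}$, which contradicts the previous bound. Hence $\hat g(t)=\tilde g$ for every candidate $t$; the contradiction argument needs no tie-breaking.

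The main obstacle is only bookkeeping in the first claim: keeping the $1/n$ versus $1/(n+1)$ normalizations and the extra test term within the stated constant. The influence identity \eqref{eq:augmented_calibration_residual_difference} does the real work.
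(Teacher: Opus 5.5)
Your proposal is correct and follows the paper's proof essentially step for step: you use the same bound of $t_{\max}-t_0$ on all horizontal residuals and local averages, the same three-term decomposition of $Q_g(t)-Q_g^{\mathrm{cal}}$ driven by the influence identity \eqref{eq:augmented_calibration_residual_difference}, and the same sandwich and margin-contradiction arguments for the last two claims. The only difference is cosmetic. You use the opposite signs of the test term and the $1/(n+1)-1/n$ normalization term to bound their joint contribution by $4(t_{\max}-t_0)^2/(n+1)$, whereas the paper adds the two magnitudes to get $8(t_{\max}-t_0)^2/(n+1)$; both fit within $\varepsilon_n^{\mathrm{stab}}$.
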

The quantity \(\bar\pi_n\) measures the largest average
weight over the bandwidth grid, assigned to the augmented test point across the \(n\) calibration
neighborhoods. Since \(0\le\pi_j(g)\le1\), we have
\(
0\le\bar\pi_n\le1.
\)
Individual values of \(\pi_j(g)\) can be much larger than the nominal value
\(1/n\). For example, a test
state that is especially close to a particular calibration state may receive
substantial weight in that calibration neighborhood.

If \(S_1,\ldots,S_{n+1}\) are exchangeable, then for each fixed bandwidth
pair \(g\in\mathcal G\), exchangeability provides an average control. Conditional on
\(S_j\), the \(n\) states
\(\{S_k:k\ne j\}\) are exchangeable, and their normalized kernel weights sum to one.
Therefore, $\mathbb E\left\{
\pi_j(g)
\,\middle|\,
S_j
\right\}
=
n^{-1}.$
Thus,
\[
\mathbb E\left[
\frac{1}{n}
\sum_{j=1}^n
\pi_j(g)
\right]
=
\frac{1}{n}.
\]
If the bandwidth grid has cardinality
\(
G=|\mathcal G|,
\)
then
\[
\begin{aligned}
\mathbb E(\bar\pi_n)
=
\mathbb E\left[
\max_{g\in\mathcal G}
\frac{1}{n}
\sum_{j=1}^n
\pi_j(g)
\right] 
\le
\sum_{g\in\mathcal G}
\mathbb E\left[
\frac{1}{n}
\sum_{j=1}^n
\pi_j(g)
\right]
=
\frac{G}{n}.
\end{aligned}
\]
Because the bandwidth grid \(\mathcal G\) has cardinality
\(G=|\mathcal G|\) that does not depend on \(n\), the bound
\(\mathbb E(\bar\pi_n)\le G/n\) implies
\(\bar\pi_n=O_p(1/n)\). Therefore,
\(\varepsilon_n^{\mathrm{stab}}=O_p(1/n)\). If, in addition,
\(
\Pp\left\{
\Delta_n^{\mathrm{cal}}
>
2\varepsilon_n^{\mathrm{stab}}
\right\}
\rightarrow 1,
\)
then \Cref{prop:calibration_only_stability} implies
\[
\Pp\left\{
\tilde U_{n+1}^{HV}
=
\hat U_{n+1}^{HV}
\right\}
\rightarrow 1,
\]
i.e., the two stopping rules are asymptotically
equivalent.

\section{Technical proofs}
\label{app:technical_proofs}

\subsection{Proof of \Cref{prop:next_band_residual_power_law_decay}}
\label{app:proof_next_band_residual_power_law_decay}

\begin{proof}
Under \Cref{ass:ordered_tail_preprojection_residual}, if
\(p_i(\tau)\equiv p_i\), then for every
\(x\in(\omega_{t-1},\omega_t]\),
\[
[\mathcal T E_i(t-1)](x)=A_i x^{-p_i}.
\]
Hence
\[
I_{i,t}^\star
=
K^{-1}A_i^2
\int_{\omega_{t-1}}^{\omega_t}x^{-2p_i}\,dx.
\]
Because \(p_i>1/2\), direct integration gives
\[
I_{i,t}^\star
=
K^{-1}A_i^2
\frac{
\omega_{t-1}^{-(2p_i-1)}-\omega_t^{-(2p_i-1)}
}{
2p_i-1
}.
\]
Using the logarithmic grid
\(\omega_t=\exp\{\Delta(t-1)\}\), we obtain
\[
\omega_{t-1}^{-(2p_i-1)}-\omega_t^{-(2p_i-1)}
=
\exp\{-(2p_i-1)\Delta(t-2)\}
\left[
1-\exp\{-(2p_i-1)\Delta\}
\right].
\]
Therefore,
\[
I_{i,t}^\star
=
K^{-1}A_i^2
\frac{
1-\exp\{-(2p_i-1)\Delta\}
}{
2p_i-1
}
\exp\{-(2p_i-1)\Delta(t-2)\}.
\]
Taking logarithms proves the result.
\end{proof}

\subsection{Proof of \Cref{prop:preprojection_residual_polynomial_trend}}
\label{app:proof_preprojection_residual_polynomial_trend}

\begin{proof}
For an acquisition index \(t\) near \(t_0\),
\Cref{ass:ordered_tail_preprojection_residual} gives
\[
[\mathcal T E_i(t-1)](x)
=
A_i x^{-p_i(t-1)}
\]
for \(x\in(\omega_{t-1},\omega_t]\). Consequently,
\[
I_{i,t}^\star
=
K^{-1}A_i^2
\int_{\omega_{t-1}}^{\omega_t}
x^{-2p_i(t-1)}\,dx.
\]
Since \(p_i(t-1)>1/2\), evaluating the integral and using
\(\omega_t=\exp\{\Delta(t-1)\}\) yields
\[
I_{i,t}^\star
=
K^{-1}A_i^2
\exp\{-[2p_i(t-1)-1]\Delta(t-2)\}
\frac{
1-\exp\{-[2p_i(t-1)-1]\Delta\}
}{
2p_i(t-1)-1
}.
\]

To apply Taylor's theorem, extend the acquisition index to a real variable
\(u\) near \(t_0\), and define
\[
\begin{aligned}
F_i(u)
:={}&
-\log K
+
2\log A_i
-
[2p_i(u-1)-1]\Delta(u-2)
\\
&\quad+
\log\left[
\frac{
1-\exp\{-[2p_i(u-1)-1]\Delta\}
}{
2p_i(u-1)-1
}
\right].
\end{aligned}
\]
For every integer \(t\) near \(t_0\), the preceding calculation gives
\(F_i(t)=\log I_{i,t}^\star\). By assumption, \(p_i\) is \(q+1\) times
continuously differentiable near \(t_0-1\), and
\(2p_i(u-1)-1\) remains strictly positive in a sufficiently small
neighborhood of \(t_0\). Hence \(F_i\) is \(q+1\) times continuously
differentiable there. Taylor's theorem around \(t_0\) therefore gives
coefficients
\(\gamma_{i,0}^{(q)},\ldots,\gamma_{i,q}^{(q)}\) such that
\[
F_i(t)
=
\sum_{\ell=0}^q
\gamma_{i,\ell}^{(q)}(t-t_0)^\ell
+
O_q\!\left(|t-t_0|^{q+1}\right).
\]
Substituting \(F_i(t)=\log I_{i,t}^\star\) proves the first claim.

It remains to establish monotonicity. Since
\(\omega_t=e^\Delta\omega_{t-1}\) and
\(\omega_{t+1}=e^\Delta\omega_t\), the change of variables
\(x=e^\Delta y\) gives
\[
\begin{aligned}
I_{i,t+1}^\star
&=
K^{-1}A_i^2
\int_{\omega_t}^{\omega_{t+1}}
x^{-2p_i(t)}\,dx
\\
&=
K^{-1}A_i^2
e^{-[2p_i(t)-1]\Delta}
\int_{\omega_{t-1}}^{\omega_t}
y^{-2p_i(t)}\,dy.
\end{aligned}
\]
If \(p_i(t)\ge p_i(t-1)\), then \(y\ge1\) on the integration interval implies
\[
y^{-2p_i(t)}
\le
y^{-2p_i(t-1)},
\]
while
\(
e^{-[2p_i(t)-1]\Delta}\le1.
\)
Therefore,
\[
I_{i,t+1}^\star
\le
K^{-1}A_i^2
\int_{\omega_{t-1}}^{\omega_t}
y^{-2p_i(t-1)}\,dy
=
I_{i,t}^\star.
\]
This proves the second claim.
\end{proof}

\subsection{Proof of \Cref{prop:monotone_projection_gain}}
\label{app:proof_monotone_projection_gain}

\begin{proof}
Fix image \(i\), and write \(r=\hat q_i\). Let \(X_r\) be the design matrix
whose row indexed by \(t\in\mathcal R_0\) is \(\phi_r(t)^\top\). Then
\[
\|\gamma\|_{\mathcal R_0,r}^2
=
\|X_r\gamma\|_2^2.
\]
Because the design on \(\mathcal R_0\) has full column rank, this is a norm on
\(\Gamma_r\).

The first inequality follows immediately from the definition of
\(C_{\mathrm{ext}}(r)\). Applying that definition to
\(\hat\gamma_{i,r}-\gamma_i^\star\) gives
\[
\|\hat\gamma_{i,r}-\gamma_i^\star\|_{\mathcal R_0^{\mathrm c},r}^2
\le
C_{\mathrm{ext}}(r)
\|\hat\gamma_{i,r}-\gamma_i^\star\|_{\mathcal R_0,r}^2.
\]

We next consider the monotone fit. Let
\(
H_i=(H_{i,t}:t\in\mathcal R_0)
\).
The unconstrained estimator \(\hat\gamma_{i,r}\) is the unique minimizer of
\[
\|H_i-X_r\gamma\|_2^2
\]
over \(\Gamma_r\), and its normal equations give
\[
X_r^\top(H_i-X_r\hat\gamma_{i,r})=0.
\]
Thus, for every \(\gamma\in\Gamma_r\),
\[
\|H_i-X_r\gamma\|_2^2
=
\|H_i-X_r\hat\gamma_{i,r}\|_2^2
+
\|\gamma-\hat\gamma_{i,r}\|_{\mathcal R_0,r}^2.
\]
Therefore, minimizing the least-squares criterion over
\(\Gamma_r^{\mathrm{mon}}\) is equivalent to projecting
\(\hat\gamma_{i,r}\) onto \(\Gamma_r^{\mathrm{mon}}\) under the norm
\(\|\cdot\|_{\mathcal R_0,r}\). The set
\(\Gamma_r^{\mathrm{mon}}\) is closed and convex because it is defined by
finitely many linear inequalities.

By assumption,
\(\gamma_i^\star\in\Gamma_r^{\mathrm{mon}}\). The Pythagorean inequality
for projection onto a closed convex set therefore gives
\[
\|\hat\gamma_{i,r}^{\mathrm{mon}}-\gamma_i^\star\|_{\mathcal R_0,r}^2
\le
\|\hat\gamma_{i,r}-\gamma_i^\star\|_{\mathcal R_0,r}^2
-
\|\hat\gamma_{i,r}-\hat\gamma_{i,r}^{\mathrm{mon}}\|_{\mathcal R_0,r}^2.
\]
Applying the definition of \(C_{\mathrm{ext}}(r)\) to
\(\hat\gamma_{i,r}^{\mathrm{mon}}-\gamma_i^\star\) gives
\[
\|\hat\gamma_{i,r}^{\mathrm{mon}}-\gamma_i^\star\|_{\mathcal R_0^{\mathrm c},r}^2
\le
C_{\mathrm{ext}}(r)
\|\hat\gamma_{i,r}^{\mathrm{mon}}-\gamma_i^\star\|_{\mathcal R_0,r}^2.
\]
Combining the last two inequalities and substituting
\(r=\hat q_i\) proves the result.
\end{proof}

\subsection{Proof of \Cref{thm:hv_marginal_validity}}
\label{app:proof_hv_marginal_validity}

\begin{proof}
Let
\(
t^\star:=T_{n+1}^\star=\max\{T_{n+1},t_0\}.
\)
Because every deployed stopping rule takes values in
\(\{t_0,\ldots,t_{\max}\}\), the events $\{T_{n+1}^\star\le\hat U_{n+1}^{HV}\}$ and $\{T_{n+1}\le\hat U_{n+1}^{HV}\}$
coincide. Indeed, this is immediate when \(T_{n+1}\ge t_0\), while both
events hold when \(T_{n+1}<t_0\).

At the candidate \(t=t^\star\), the augmented labels equal the actual floored
stopping times:
\[
T_j^\star(t^\star)=T_j^\star,
\qquad
j=1,\ldots,n+1.
\]
Hence the augmented sample at the true candidate is
\[
\bigl(
(S_1,T_1^\star),\ldots,(S_{n+1},T_{n+1}^\star)
\bigr).
\]
The images are exchangeable, and the state, horizontal prediction, and floored
stopping time are constructed from each image by the same rule, using a
reconstruction model fitted on separate training data. Therefore, the
pairs \((S_j,T_j^\star)\), \(j\in[n+1]\), are exchangeable.

The bandwidth selector is permutation invariant by \eqref{equ:sym}. For any
selected bandwidth, permuting the augmented observations permutes the pairwise
distances, the rows and columns of the weight matrix, and hence the
leave-one-out predictions and residual scores in the same way. Thus
\(
\bigl(
R_1(t^\star),\ldots,R_{n+1}(t^\star)
\bigr)
\)
is exchangeable.

For a deterministic score vector \(z=(z_1,\ldots,z_{n+1})\), define
\[
\rho_i(z)
:=
\frac{1}{n+1}
\sum_{j=1}^{n+1}
\mathbbm{1}\{z_j\ge z_i\}.
\]
For any \(u\in[0,1]\), at most
\(\lfloor u(n+1)\rfloor\) indices can satisfy \(\rho_i(z)\le u\).
Therefore, exchangeability implies
\[
\begin{aligned}
\Pp\{p^{HV}(t^\star)\le u\}
=
\mathbb E\left[
\frac{1}{n+1}
\sum_{i=1}^{n+1}
\mathbbm{1}\{\rho_i(R(t^\star))\le u\}
\right]\le u.
\end{aligned}
\]
Thus \(p^{HV}(t^\star)\) is super-uniform, including in the presence of ties, and
\[
\Pp\{p^{HV}(t^\star)>\alpha\}
\ge
1-\alpha.
\]

Whenever \(p^{HV}(t^\star)>\alpha\), the true floored stopping time belongs to
the retained set in \eqref{equ:u_alpha}. Since \(\hat U_{n+1}^{HV}\) is the
largest retained candidate,
\[
\hat U_{n+1}^{HV}
\ge
t^\star
\ge
T_{n+1}.
\]
Consequently,
\[
\Pp\{T_{n+1}\le\hat U_{n+1}^{HV}\}
\ge
1-\alpha.
\]

Finally, if \(L_{n+1}(t)\) is nonincreasing, then on the event
\(T_{n+1}\le\hat U_{n+1}^{HV}\),
\[
L_{n+1}(\hat U_{n+1}^{HV})
\le
L_{n+1}(T_{n+1})
\le
c.
\]
This proves the reconstruction-error guarantee.
\end{proof}

\subsection{Proof of \Cref{thm:hv_approx_conditional_coverage}}
\label{app:proof_hv_approx_conditional_coverage}

\begin{proof}
Throughout the proof, the bandwidth pair
\(g=(h_T,h_\Omega)\) is fixed. All weights, scores, \(p\)-values, and stopping
rules refer to this fixed-bandwidth construction, and we suppress \(g\) from
the notation.

Let
\(
t^\star:=T_{n+1}^\star.
\)
If \(p^{HV}(t^\star)>\alpha\), then \(t^\star\) is retained and
\[
\hat U_{n+1}^{HV}(g)
\ge
t^\star
\ge
T_{n+1}.
\]
Therefore,
\[
\Pp\left\{
T_{n+1}>\hat U_{n+1}^{HV}(g)
\,\middle|\,
S_{n+1}
\right\}
\le
\Pp\left\{
p^{HV}(t^\star)\le\alpha
\,\middle|\,
S_{n+1}
\right\}.
\]
It remains to bound the probability on the right.

At \(t=t^\star\), the augmented labels equal the actual floored stopping times,
so \(e_j(t^\star)=e_j^\star\) for every \(j\). By
\Cref{ass:feature_bias_model},
\[
e_j^\star=m_n(S_j)+\xi_j.
\]
Substituting this decomposition into \eqref{eq:hv_residual_score} gives
\[
R_j(t^\star)
=
\xi_j+B_j-Z_j,
\]
where
\[
B_j
:=
m_n(S_j)
-
\sum_{k\ne j}\hat\omega_{j,k}m_n(S_k),
\qquad
Z_j
:=
\sum_{k\ne j}\hat\omega_{j,k}\xi_k.
\]

By the reproducing property and the Cauchy--Schwarz inequality,
\[
\begin{aligned}
|B_j|
&=
\left|
\left\langle
m_n,
\kappa(S_j,\cdot)
-
\sum_{k\ne j}
\hat\omega_{j,k}\kappa(S_k,\cdot)
\right\rangle_{\mathcal H_\kappa}
\right|
\\
&\le
\|m_n\|_{\mathcal H_\kappa}
\left\|
\kappa(S_j,\cdot)
-
\sum_{k\ne j}
\hat\omega_{j,k}\kappa(S_k,\cdot)
\right\|_{\mathcal H_\kappa}
\\
&\le
\Lambda_{\kappa,n}D_{\kappa,n}.
\end{aligned}
\]

We next control \(Z_j\). Let
\[
K_{jk}(g):=\exp\{-\ell_{jk}(g)\}.
\]
Since
\(\hat T_i^H\in[t_0,t_{\max}]\) and \(\hat\Omega_i\in[0,1]\),
\[
\ell_{jk}(g)
\le
\frac{(t_{\max}-t_0)^2}{h_T^2}
+
\frac{1}{h_\Omega^2}.
\]
Define
\[
c_g
:=
\exp\left\{
-\frac{(t_{\max}-t_0)^2}{h_T^2}
-\frac{1}{h_\Omega^2}
\right\}>0.
\]
Then \(c_g\le K_{jk}(g)\le1\). Each leave-one-out denominator contains
exactly \(n\) terms, so
\[
\sum_{\ell\ne j}K_{j\ell}(g)\ge nc_g,
\qquad
\hat\omega_{j,k}\le\frac{1}{nc_g}.
\]
Because the weights are nonnegative and sum to one,
\[
\sum_{k\ne j}\hat\omega_{j,k}^2
\le
\left(\max_{k\ne j}\hat\omega_{j,k}\right)
\sum_{k\ne j}\hat\omega_{j,k}
\le
\frac{1}{nc_g}.
\]

Conditional on \(S_1,\ldots,S_{n+1}\), the weights are fixed. Independence and
sub-Gaussianity of the \(\xi_k\)'s therefore imply that, for every
\(\lambda\in\mathbb R\),
\[
\begin{aligned}
\mathbb E\left[
\exp(\lambda Z_j)
\,\middle|\,
S_1,\ldots,S_{n+1}
\right]
&\le
\exp\left\{
\frac{\sigma^2\lambda^2}{2}
\sum_{k\ne j}\hat\omega_{j,k}^2
\right\}
\le
\exp\left\{
\frac{\sigma^2\lambda^2}{2nc_g}
\right\}.
\end{aligned}
\]
Thus, conditionally on the states, \(Z_j\) is
\(\sigma^2/(nc_g)\)-sub-Gaussian. Define
\[
b_{n,g}
:=
\frac{\sigma}{\sqrt{nc_g}}
\sqrt{2\log\{2(n+1)n^2\}}.
\]
A union bound over \(j\in[n+1]\) shows that
$\mathcal E_n
:=
\left\{
\max_{1\le j\le n+1}|Z_j|\le b_{n,g}
\right\}$
satisfies
\[
\Pp(\mathcal E_n^c\mid S_1,\ldots,S_{n+1})
\le
\frac{1}{n^2}.
\]
On \(\mathcal E_n\),
\[
\max_{1\le j\le n+1}
|R_j(t^\star)-\xi_j|
\le
r_{n,g},
\qquad
r_{n,g}
:=
\Lambda_{\kappa,n}D_{\kappa,n}+b_{n,g}.
\]
The quantity \(r_{n,g}\) is measurable with respect to the states.

We first derive a comparison that does not require a density and applies to
discrete, continuous, or mixed residual distributions. Define the concentration
function
\[
d_\xi(r)
:=
\sup_{z\in\mathbb R}
\Pp\{|\xi_1-z|\le r\},
\qquad
r\ge0,
\]
and the ideal rank \(p\)-value
\[
p(\xi)
:=
\frac{1}{n+1}
\sum_{j=1}^{n+1}
\mathbbm{1}\{\xi_j\ge\xi_{n+1}\}.
\]
Conditional on the states, the variables
\(\xi_1,\ldots,\xi_{n+1}\) are i.i.d. Hence, by the same tie-safe rank
argument used in the proof of \Cref{thm:hv_marginal_validity},
\[
\Pp\left\{
p(\xi)\le u
\,\middle|\,
S_1,\ldots,S_{n+1}
\right\}
\le u,
\qquad
u\in[0,1].
\]

On \(\mathcal E_n\), if
\[
\xi_j\ge\xi_{n+1}
\quad\text{and}\quad
|\xi_j-\xi_{n+1}|>2r_{n,g},
\]
then
\[
R_j(t^\star)
\ge
\xi_j-r_{n,g}
>
\xi_{n+1}+r_{n,g}
\ge
R_{n+1}(t^\star).
\]
Therefore,
\(
p^{HV}(t^\star)
\ge
p(\xi)-A_n,
\)
where
\[
A_n
:=
\frac{1}{n+1}
\sum_{j=1}^n
\mathbbm{1}
\left\{
|\xi_j-\xi_{n+1}|
\le
2r_{n,g}
\right\}.
\]

Conditional on the states and on \(\xi_{n+1}\), the indicators in \(A_n\)
are independent, and each has conditional expectation at most
\(d_\xi(2r_{n,g})\). Hoeffding's inequality therefore implies that,
with conditional probability at least \(1-n^{-2}\),
\[
\frac{1}{n}
\sum_{j=1}^n
\mathbbm{1}
\left\{
|\xi_j-\xi_{n+1}|
\le
2r_{n,g}
\right\}
\le
d_\xi(2r_{n,g})
+
\sqrt{\frac{\log n}{n}}.
\]
Let \(\mathcal H_n\) denote this event. Since \(n/(n+1)\le1\), on
\(\mathcal H_n\),
\[
A_n
\le
d_\xi(2r_{n,g})
+
\sqrt{\frac{\log n}{n}}.
\]

On \(\mathcal E_n\cap\mathcal H_n\), the event
\(p^{HV}(t^\star)\le\alpha\) implies
\[
p(\xi)
\le
\alpha
+
d_\xi(2r_{n,g})
+
\sqrt{\frac{\log n}{n}}.
\]
Using the super-uniformity of \(p(\xi)\) and adding the two failure
probabilities gives
\[
\begin{aligned}
&
\Pp\left\{
p^{HV}(t^\star)\le\alpha
\,\middle|\,
S_1,\ldots,S_{n+1}
\right\}
\le
\alpha
+
d_\xi(2r_{n,g})
+
\sqrt{\frac{\log n}{n}}
+
\frac{2}{n^2}.
\end{aligned}
\]
If the threshold on the right of the ideal \(p\)-value exceeds one, the same
inequality remains true because probabilities are bounded by one.

Under the bounded-density condition in
\Cref{ass:feature_bias_model},
\[
d_\xi(2r_{n,g})
\le
4Mr_{n,g}
=
4M\Lambda_{\kappa,n}D_{\kappa,n}
+
4Mb_{n,g}.
\]
Hence
\[
\begin{aligned}
&
\Pp\left\{
p^{HV}(t^\star)\le\alpha
\,\middle|\,
S_1,\ldots,S_{n+1}
\right\}
\le
\alpha
+
4M\Lambda_{\kappa,n}D_{\kappa,n}
+
4Mb_{n,g}
+
\sqrt{\frac{\log n}{n}}
+
\frac{2}{n^2}.
\end{aligned}
\]
Taking conditional expectation over
\(S_1,\ldots,S_n\) given \(S_{n+1}\) yields
\[
\begin{aligned}
&
\Pp\left\{
p^{HV}(t^\star)\le\alpha
\,\middle|\,
S_{n+1}
\right\}
\le
\alpha
+
4M\Lambda_{\kappa,n}
\mathbb E\left[
D_{\kappa,n}
\,\middle|\,
S_{n+1}
\right]
+
4Mb_{n,g}
+
\sqrt{\frac{\log n}{n}}
+
\frac{2}{n^2}.
\end{aligned}
\]

For \(n\ge2\), we have $b_{n,g} \le \eta_g' \sqrt{(\log n)/n}$ for a constant
\(\eta_g'>0\) depending only on \(g,\sigma,t_0\), and \(t_{\max}\). Also,
\[
\frac{2}{n^2}
=
O\left(
\sqrt{\frac{\log n}{n}}
\right).
\]
Therefore, there is a constant \(\eta_g>0\), depending only on
\(g,\sigma,M,t_0\), and \(t_{\max}\), such that
\[
4Mb_{n,g}
+
\sqrt{\frac{\log n}{n}}
+
\frac{2}{n^2}
\le
\eta_g
\sqrt{\frac{\log n}{n}}.
\]
Combining this bound with the initial reduction gives
\[
\Pp\left\{
T_{n+1}>
\hat U_{n+1}^{HV}(g)
\,\middle|\,
S_{n+1}
\right\}
\le
\alpha
+
4M\Lambda_{\kappa,n}
\mathbb E\left[
D_{\kappa,n}
\,\middle|\,
S_{n+1}
\right]
+
\eta_g
\sqrt{\frac{\log n}{n}}.
\]

Finally, monotonicity of the loss implies
\[
\{T_{n+1}\le\hat U_{n+1}^{HV}(g)\}
\subseteq
\{L_{n+1}(\hat U_{n+1}^{HV}(g))\le c\}.
\]
Taking conditional probabilities proves the theorem.
\end{proof}

\paragraph{Remark on grid-valued residuals.}
Before invoking the bounded-density condition, the proof establishes the more
general bound
\[
\begin{aligned}
\Pp\left\{
p^{HV}(t^\star)\le\alpha
\,\middle|\,
S_1,\ldots,S_{n+1}
\right\}
\le
\alpha
+
d_\xi(2r_{n,g})
+
\sqrt{\frac{\log n}{n}}
+
\frac{2}{n^2}.
\end{aligned}
\]
This bound remains valid for discrete \(\xi_j\) when the additive model holds
and the error terms remain i.i.d., sub-Gaussian, and independent of the states,
without requiring the bounded-density condition.
The term
\(d_\xi(2r_{n,g})\) is the largest probability that the unexplained
residual falls within distance \(2r_{n,g}\) of any fixed value. For a discrete
distribution, it includes the probability of exact ties. The bounded-density
assumption is used only to replace this near-tie probability by the simpler
upper bound \(4Mr_{n,g}\).

\subsection{Proof of \Cref{prop:calibration_only_stability}}
\label{app:proof_calibration_only_stability}

\begin{proof}
Write \(M_T:=t_{\max}-t_0\). By construction,
\(T_k^\star,\hat T_k^H,t\in[t_0,t_{\max}]\), so
\[
|e_k^\star|\le M_T,
\qquad
k\in[n],\qquad |e_{n+1}(t)|\le M_T.
\]
Every local residual average is a convex combination of these quantities.
Therefore,
\[
|\bar e_j^{(n)}(g)|\le M_T,
\qquad
|\bar e_j(t;g)|\le M_T,
\qquad
|\bar e_{n+1}(g)|\le M_T.
\]

For \(j\in[n]\), \eqref{eq:online_local_residual_average} gives
\[
\begin{aligned}
R_j(t;g)-R_j^{\mathrm{cal}}(g)
=
\left[e_j^\star-\bar e_j(t;g)\right]
-
\left[e_j^\star-\bar e_j^{(n)}(g)\right]
=
\pi_j(g)
\left\{
\bar e_j^{(n)}(g)-e_{n+1}(t)
\right\}.
\end{aligned}
\]
Hence
\[
|R_j(t;g)-R_j^{\mathrm{cal}}(g)|
\le
2M_T\pi_j(g).
\]
Moreover,
\[
|R_j(t;g)|\le2M_T,
\qquad
|R_j^{\mathrm{cal}}(g)|\le2M_T.
\]
Using \(|a^2-b^2|=|a-b||a+b|\), we obtain
\[
|R_j(t;g)^2-R_j^{\mathrm{cal}}(g)^2|
\le
8M_T^2\pi_j(g).
\]
For the test point,
\[
|R_{n+1}(t;g)|\le2M_T,
\qquad
R_{n+1}(t;g)^2\le4M_T^2,\qquad
\frac{1}{n}
\sum_{j=1}^n
R_j^{\mathrm{cal}}(g)^2
\le
4M_T^2.
\]

Using the definitions of \(Q_g(t)\) and \(Q_g^{\mathrm{cal}}\),
\[
\begin{aligned}
Q_g(t)-Q_g^{\mathrm{cal}}
&=
\frac{1}{n+1}
\sum_{j=1}^n
\left\{
R_j(t;g)^2-R_j^{\mathrm{cal}}(g)^2
\right\}
\\
&\quad+
\frac{R_{n+1}(t;g)^2}{n+1}
+
\left(
\frac{1}{n+1}-\frac{1}{n}
\right)
\sum_{j=1}^n
R_j^{\mathrm{cal}}(g)^2.
\end{aligned}
\]
Since
\(
\left|1/(n+1) -1/n
\right|
=1/[n(n+1)],
\)
the preceding bounds imply
\[
|Q_g(t)-Q_g^{\mathrm{cal}}|
\le
\frac{8M_T^2}{n+1}
\left\{
1+\sum_{j=1}^n\pi_j(g)
\right\}.
\]
Taking the maximum over
\(g\in\mathcal G\) and \(t\in\{t_0,\ldots,t_{\max}\}\), and using
\[
\max_{g\in\mathcal G}
\sum_{j=1}^n\pi_j(g)
=
n\bar\pi_n,
\]
gives
\[
\max_{g\in\mathcal G}
\max_{t\in\{t_0,\ldots,t_{\max}\}}
|Q_g(t)-Q_g^{\mathrm{cal}}|
\le
\frac{8M_T^2(1+n\bar\pi_n)}{n+1}
=
\varepsilon_n^{\mathrm{stab}}.
\]

Because \(\tilde g\) minimizes \(Q_g^{\mathrm{cal}}\),
\(
Q_{\hat g(t)}^{\mathrm{cal}}
-
Q_{\tilde g}^{\mathrm{cal}}
\ge
0.
\)
For the upper bound, write
\[
\begin{aligned}
Q_{\hat g(t)}^{\mathrm{cal}}
-
Q_{\tilde g}^{\mathrm{cal}}
=
\left\{
Q_{\hat g(t)}^{\mathrm{cal}}
-
Q_{\hat g(t)}(t)
\right\}
+
\left\{
Q_{\hat g(t)}(t)
-
Q_{\tilde g}(t)
\right\}
+
\left\{
Q_{\tilde g}(t)
-
Q_{\tilde g}^{\mathrm{cal}}
\right\}.
\end{aligned}
\]
The first and third terms are at most
\(\varepsilon_n^{\mathrm{stab}}\) in absolute value, while the middle term is
nonpositive because \(\hat g(t)\) minimizes \(Q_g(t)\). Therefore,
\[
0
\le
Q_{\hat g(t)}^{\mathrm{cal}}
-
Q_{\tilde g}^{\mathrm{cal}}
\le
2\varepsilon_n^{\mathrm{stab}}.
\]

Finally, suppose that \(\tilde g\) is the unique minimizer of
\(Q_g^{\mathrm{cal}}\) and
\[
\Delta_n^{\mathrm{cal}}
>
2\varepsilon_n^{\mathrm{stab}}.
\]
If \(\hat g(t)\ne\tilde g\) for some candidate \(t\), then the definition of
\(\Delta_n^{\mathrm{cal}}\) implies
\[
Q_{\hat g(t)}^{\mathrm{cal}}
-
Q_{\tilde g}^{\mathrm{cal}}
\ge
\Delta_n^{\mathrm{cal}}
>
2\varepsilon_n^{\mathrm{stab}},
\]
contradicting the preceding inequality. Hence
\(
\hat g(t)=\tilde g
\)
for every \(t\in\{t_0,\ldots,t_{\max}\}\). When the selected bandwidths agree,
the two procedures use the same augmented weights for every candidate and
therefore produce the same residual scores, \(p\)-values, and final stopping
rule.
\end{proof}

\end{document}